\def\NN{{\mathbb N}}    
\def\RR{{\mathbb R}}    
\def\PP{{\mathbb P}}     
\def\EE{{\mathbb E}}    
\def\11{{\mathbf 1}}    
  \def\cG{{\mathcal G}}  \def\cS{{\mathcal S}}   \def\cH{{\mathcal H}} \def\cN{{\mathcal N}}     \def\cO{{\mathcal O}}  \def\cD{{\mathcal D}}   \def\cP{{\mathcal P}}  \def\cE{{\mathcal E}}    \def\cW{{\mathcal W}} \def\cF{{\mathcal F}}  \def\cL{{\mathcal L}}  \def\cX{{\mathcal X}} \def\cY{{\mathcal Y}}  \def\cZ{{\mathcal Z}}
\def\boldx{{\boldsymbol{x}}} \def\boldt{{\boldsymbol{t}}}
\def\bfx{{\bf x}} \def\bfy{{\bf y}} \def\bfz{{\bf z}} \def\bfw{{\bf w}}
 \def\bfK{{\bf K}} 
\def\bfL{{\bf L}} \def\bfQ{{\bf Q}} \def\bfA{{\bf A}}
\def\bfPhi{{\bf \Phi}} \def\bfPsi{{\bf \Psi}}
\def\boldupsilon{\boldsymbol{\Upsilon}}
\def\bfeta{\boldsymbol{\eta}} \def\bfSigma{\boldsymbol{\Sigma}}
\def\rmc{{\mathrm c}}
\def\rmd{{\mathrm d}}
\newcommand{\deffunction}[5]{
{#1}:
\left|
  \begin{array}{rcl}
    {#2} & \longrightarrow & {#3} \\
    {#4} & \longmapsto & {#5} \\
  \end{array}
\right.
}
\def\d{\,{\mathrm d}}
\def\tr{\operatorname{tr}}
\newcommand{\Range}[1]{\operatorname{Range}({#1})}
\newcommand{\cSpan}[1]{\overline{\operatorname{Span}}\left\{{#1}\right\}}
\theoremstyle{plain}
	\newtheorem{theorem}{Theorem}[section] 
	\newtheorem{proposition}[theorem]{Proposition}        
\theoremstyle{definition}
	\newtheorem{definition}[theorem]{Definition}
\theoremstyle{remark}
\theoremstyle{plain}
    \providecommand{\customgenericname}{}
    \newcommand{\newcustomtheorem}[2]{%
      \newenvironment{#1}[1]
      {%
       \renewcommand\customgenericname{#2}%
       \renewcommand\theinnercustomgeneric{##1}%
       \innercustomgeneric
      }
      {\endinnercustomgeneric}
    }
\theoremstyle{plain}
  \renewcommand{\contentsname}%
    {Supplementary materials for \\ \enquote{Deconditional Downscaling with Gaussian Processes}}%
\title{Deconditional Downscaling with Gaussian Processes}
\author{%
    Siu Lun Chau\thanks{Indicates equal contribution}    \strut    \thanks{Department of Statistics, Oxford, UK, OX1 3LB. <siu.chau@stats.ox.ac.uk, shahine.bouabid@stats.ox.ac.uk, dino.sejdinovic@stats.ox.ac.uk>}\\ 
    University of Oxford \\ 
    \And
    Shahine Bouabid\footnotemark[1]  \strut    \footnotemark[2] \\
    University of Oxford \\ 
    \And
    Dino Sejdinovic\footnotemark[2] \\
    University of Oxford\\
}
\begin{document}

\maketitle

\begin{abstract}
    Refining low-resolution (LR) spatial fields with high-resolution (HR) information, often known as \emph{statistical downscaling}, is challenging as the diversity of spatial datasets often prevents direct matching of observations. Yet, when LR samples are modeled as aggregate conditional means of HR samples with respect to a mediating variable that is globally observed, the recovery of the underlying fine-grained field can be framed as taking an \enquote{inverse} of the conditional expectation, namely a \emph{deconditioning problem}. In this work, we propose a Bayesian formulation of deconditioning which naturally recovers the initial reproducing kernel Hilbert space formulation from \citet{hsu19a}. We extend deconditioning to a downscaling setup and devise efficient conditional mean embedding estimator for multiresolution data. By treating conditional expectations as inter-domain features of the underlying field, a posterior for the latent field can be established as a solution to the deconditioning problem. Furthermore, we show that this solution can be viewed as a two-staged vector-valued kernel ridge regressor and show that it has a minimax optimal convergence rate under mild assumptions. Lastly, we demonstrate its proficiency in a synthetic and a real-world atmospheric field downscaling problem, showing substantial improvements over existing methods.
\end{abstract}

\addtocontents{toc}{\protect\setcounter{tocdepth}{-1}} 
\section{Introduction}
\label{section:introduction}

Spatial observations often operate at limited resolution due to practical constraints. For example, remote sensing atmosphere products~\cite{TheMODISAerosolAlgorithmProductsandValidation, modis_cloud, Stephens, 700993} provide measurement of atmospheric properties such as cloud top temperatures and optical thickness, but only at a low resolution. Devising methods to refine low-resolution (LR) variables for local-scale analysis thus plays a crucial part in our understanding of the anthropogenic impact on climate

When high-resolution (HR) observations of different covariates are available, details can be instilled into the LR field for refinement. This task is referred to as \emph{statistical downscaling} or \emph{spatial disaggregation} and models LR observations as the aggregation of an unobserved underlying HR field. For example, multispectral optical satellite  imagery~\cite{malenovsky, landsat} typically comes at higher resolution than atmospheric products and can be used to refine the latter.

Statistical downscaling has been studied in various forms, notably giving it a probabilistic treatment~\cite{zhang2020aggregate, LeonLaw2018, Hamelijnck2019a, Yousefi2019, Tanaka2019a, VilleTanskanenKristaLongi2020}, in which Gaussian processes (GP)~\cite{rasmussen2005gaussian} are typically used in conjunction with a sparse variational formulation~\cite{Titsias2009VariationalLO} to recover the underlying unobserved HR field. Our approach follows this line of research where we do not observe data from the underlying HR groundtruth field. On the other hand, deep neural network (DNN) based approaches~\cite{vaughan2021convolutional,groenke2020climalign,vandal2017deepsd} study this problem from a different setting, where they often assume that both HR and LR matched observations are available for training. Then, their approaches follow a standard supervised learning setting in learning a mapping between different resolutions.

However, both lines of existing methods require access to bags of HR covariates that are paired with aggregated targets, which in practice might be infeasible. For example, the multitude of satellites in orbit not only collect snapshots of the atmosphere at different resolutions, but also from different places and at different times, such that these observations are not jointly observed. To overcome this limitation, we propose to consider a more flexible \emph{mediated statistical downscaling} setup that only requires indirect matching between LR and HR covariates through a mediating field. We assume that this additional field can be easily observed, and matched separately with both HR covariates and aggregate LR targets. We then use this third-party field to mediate learning and downscale our unmatched data. In our motivating application, climate simulations~\cite{eyring2016overview, flato2011esm, scholze2012esm} based on physical science can serve as a mediating field since they provide a comprehensive spatiotemporal coverage of meteorological variables that can be matched to both LR and HR covariates.

Formally, let $^b\!\boldx = \{x^{(1)}, \ldots, x^{(n)}\}\subset\cX$ be a general notation for bags of HR covariates, $f:\cX\rightarrow\RR$ the field of interest we wish to recover and $\tilde z$ the LR aggregate observations from the field $f$. We suppose that $^b\!\boldx$ and $\tilde z$ are unmatched, but that there exists mediating covariates $y, \tilde y\in\cY$, such that $(^b\!\boldx, y)$ are jointly observed and likewise for $(\tilde y, \tilde z)$ as illustrated in Figure~\ref{fig: illustration}. We assume the following aggregation observation model $\tilde z = \EE_X[f(X)|Y=\tilde y] + \varepsilon$ with some noise $\varepsilon$. Our goal in mediated statistical downscaling is then to estimate $f$ given $(^b\!\boldx, y)$ and $(\tilde y, \tilde z)$, which corresponds to the \emph{deconditioning} problem introduced in \cite{hsu19a}.

\begin{wrapfigure}{rt}{0.35\textwidth}
    \centering
    \includegraphics[width=0.35\textwidth]{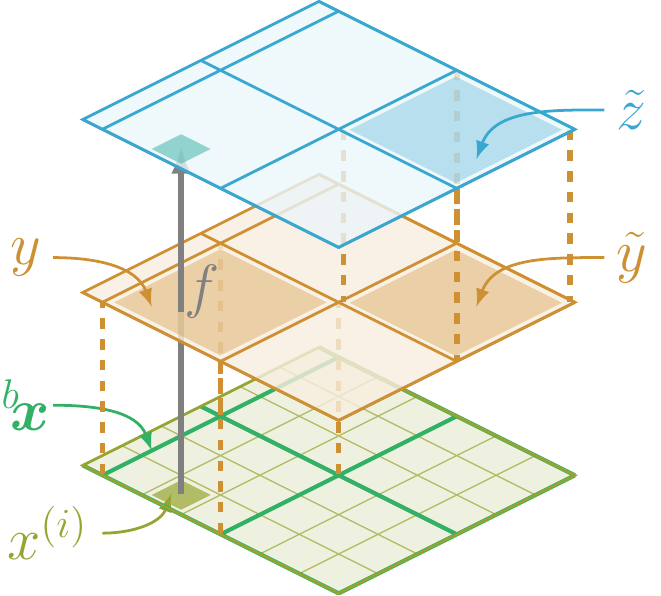}
    \caption{The LR response $\tilde z$ (blue) and the bag HR covariates $^b\!\boldx$ (green) are unmatched. The downscaling is mediated through bag-level LR covariates $y$ and $\tilde y$ (orange).}
    \label{fig: illustration}
\end{wrapfigure}

Motivated by applications in likelihood-free inference and task-transfer regression, \citet{hsu19a} first studied the deconditioning problem through the lens of reproducing kernel Hilbert space (RKHS) and introduced the framework of \emph{Deconditional Mean Embeddings} (DME) as its solution.

In this work, we first propose a Bayesian formulation of deconditioning that results into a much simpler and elegant way to arrive to the DME-based estimator of \citet{hsu19a}, using the conditional expectations of $f$. Motivated by our mediated statistical downscaling problem, we then extend deconditioning to a multi-resolution setup and bridge the gap between DMEs and existing probabilistic statistical downscaling methods~\cite{LeonLaw2018}. By placing a GP prior on the sought field $f$, we obtain a posterior distribution of the downscaled field as a principled Bayesian solution to the downscaling task on indirectly matched data. For scalability, we provide a tractable variational inference approximation and an alternative approximation to the conditional mean operator (CMO)~\cite{muandet2016kernel} to speed up computation for large multi-resolution datasets.

From a theoretical stand point, we further develop the framework of DMEs by establishing it as a two-staged vector-valued regressor with a natural reconstruction loss that mirrors \citet{grunewalder2012conditional}'s work on conditional mean embeddings. This perspective allows us to leverage distribution regression theory from \cite{Szabo2016, singh2019kernel} and obtain convergence rate results for the deconditional mean operator (DMO) estimator. Under mild assumptions, we obtain conditions under which this rate is a minimax optimal in terms of statistical-computational efficiency.

Our contributions are summarized as follows:
\begin{itemize}
    \item We propose a Bayesian formulation of the deconditioning problem of \citet{hsu19a}. We establish its posterior mean as a DME-based estimate and its posterior covariance as a gauge of the deconditioning quality.
    \item We extend deconditioning to a multi-resolution setup in the context of the mediated statistical downscaling problem and bridge the gap with existing probabilistic statistical downscaling methods. Computationally efficient algorithms are devised.
    \item We demonstrate that the DMO estimate minimises a two-staged vector-valued regression and derive its convergence rate under mild assumptions, with conditions for minimax optimality.
    \item We benchmark our model against existing methods for statistical downscaling tasks in climate science, on both synthetic and real-world multi-resolution atmospheric fields data, and show improved performance.
\end{itemize}
\section{Background Materials}
\label{section:background}

\subsection{Notations}

Let $X, Y$ be a pair of random variables taking values in non-empty sets $\cX$ and $\cY$, respectively. Let $k: \cX \times \cX \rightarrow \RR$ and $\ell: \cY \times \cY \rightarrow \RR$ be positive definite kernels. The closure of the span of their canonical feature maps $k_x:=k(x, \cdot)$ and $\ell_y:=\ell(y, \cdot)$ for $x\in\cX$ and $y\in\cY$ respectively induces RKHS $\cH_{k}\subseteq \RR^\cX$ and $\cH_{\ell} \subseteq \RR^\cY$ endowed with inner products $\langle\cdot, \cdot\rangle_k$ and $\langle\cdot, \cdot\rangle_\ell$.

We observe realizations $^b\!\cD_1 = \{^b\!\boldx_j, y_j\}_{j=1}^N$ of bags $^b\!\boldx_j = \{x_j^{(i)}\}_{i=1}^{n_j}$ from conditional distribution $\PP_{X|Y=y_j}$, with bag-level covariates $y_j$ sampled from $\PP_Y$. We concatenate them into vectors $\bfx := \begin{bmatrix}^b\!\boldx_1& \hdots & ^b\!\boldx_N\end{bmatrix}^\top$ and $\bfy := \begin{bmatrix}y_1 & \hdots & y_N\end{bmatrix}^\top$.

For simplicity, our exposition will use the notation without bagging -- i.e.\ where
$\cD_1 = \{x_j, y_j\}_{j=1}^N$ -- when the generality of our contributions will be relevant to the theory of deconditioning from \citet{hsu19a}, in Sections~\ref{section:background}, \ref{section:deconditioning-with-gps} and \ref{section:deconditioning-as-regression}. We will come back to a bagged dataset formalism in Sections~\ref{subsection:shrinkage-cmo} and \ref{section:experiments}, which corresponds to our motivating application of mediated statistical downscaling.

With an abuse of notation, we define feature matrices by stacking feature maps along columns as $\bfPhi_\bfx := \begin{bmatrix}k_{x_1} & \hdots & k_{x_N}\end{bmatrix}$ and $\bfPsi_\bfy := \begin{bmatrix}\ell_{y_1} & \hdots & \ell_{y_N}\end{bmatrix}$ and we denote Gram matrices as $\bfK_{\bfx\bfx} = \bfPhi_\bfx^\top\bfPhi_\bfx = [k(x_i, x_j)]_{1\leq i, j\leq N}$ and $\bfL_{\bfy\bfy} = \bfPsi_\bfy^\top\bfPsi_\bfy = [\ell(y_i, y_j)]_{1\leq i, j\leq N}$. The notation abuse $(\cdot)^\top(\cdot)$ is a shorthand for the elementwise RKHS inner products when it is clear from the context.

Let $Z$ denote the real-valued random variable stemming from the noisy conditional expectation of some unknown latent function $f:\cX\rightarrow\RR$, as $Z = \EE_X[f(X)|Y] + \varepsilon$. We suppose one observes another set of realizations $\cD_2 = \{\tilde y_j, \tilde z_j\}_{j=1}^M$ from $\PP_{YZ}$, which is sampled independently from $\cD_1$. Likewise, we stack observations into vectors $\tilde\bfy := \begin{bmatrix}\tilde y_1 & \hdots & \tilde y_M\end{bmatrix}^\top$, $\tilde \bfz := \begin{bmatrix}\tilde z_1 & \hdots & \tilde z_M\end{bmatrix}^\top$ and define feature map $\bfPsi_{\tilde\bfy} := \begin{bmatrix}\ell_{\tilde y_1} & \hdots & \ell_{\tilde y_M}\end{bmatrix}$.

\subsection{Conditional and Deconditional Kernel Mean Embeddings}
\paragraph*{Marginal and Joint Mean Embeddings} Kernel mean embeddings of distributions provide a powerful framework for representing and manipulating distributions without specifying their parametric form~\cite{song2013kernel, muandet2016kernel}. The marginal mean embedding of measure $\PP_X$ is defined as $\mu_{X} := \EE_X[k_X]\in\cH_k$ and corresponds to the Riesz representer of expectation functional $f\mapsto \EE_X[f(X)]$. It can hence be used to evaluate expectations $\EE_X[f(X)] = \langle f, \mu_X\rangle_k$. If the mapping $\PP_X\mapsto\mu_X$ is injective, the kernel $k$ is said to be characteristic~\cite{fukumizu2004dimensionality}, a property satisfied for the Gaussian and Matérn kernels on $\RR^d$~\cite{fukumizu2008kernel}. In practice, Monte Carlo estimator $\hat{\mu}_{X}:=\frac{1}{N}\sum_{i=1}^N k_{x_i}$ provides an unbiased estimate of $\mu_X$~\cite{sriperumbudur2012empirical}.

Extending this rationale to embeddings of joint distributions, we define $C_{YY} := \EE_{Y}[\ell_Y\otimes\ell_Y]\in\cH_\ell\otimes\cH_\ell$ and $C_{XY} := \EE_{X,Y}[k_X\otimes\ell_Y]\in\cH_k\otimes\cH_\ell$, which can be identified with the cross-covariance operators between Hilbert spaces $C_{YY}:\cH_\ell\rightarrow\cH_\ell$ and $C_{XY}:\cH_\ell\rightarrow\cH_k$. They correspond to the Riesz representers of bilinear forms $(g, g')\mapsto\operatorname{Cov}(g(Y), g'(Y)) = \langle g, C_{YY}g'\rangle_\ell$ and $(f, g)\mapsto\operatorname{Cov}(f(X), g(Y)) = \langle f, C_{XY}g\rangle_k$. As above, empirical estimates are obtained as $\hat C_{YY} = \frac{1}{N}\bfPsi_\bfy\bfPsi_\bfy^\top = \frac{1}{N}\sum_{i=1}^N\ell_{y_i}\otimes\ell_{y_i}$ and $\hat C_{XY} = \frac{1}{N}\bfPhi_\bfx\bfPsi_\bfy^\top = \frac{1}{N}\sum_{i=1}^N k_{x_i}\otimes\ell_{y_i}$. Again, notation abuse $(\cdot)(\cdot)^\top$ is a shorthand for element-wise tensor products when clear from context.

\paragraph*{Conditional Mean Embeddings} Similarly, one can introduce RKHS embeddings for conditional distributions referred to as \emph{Conditional Mean Embeddings} (CME). The CME of conditional probability measure $\PP_{X|Y=y}$ is defined as $\mu_{X|Y=y} := \EE_X[k_X|Y=y] \in \cH_k$. As introduced by \citet{fukumizu2004dimensionality}, it is common to formulate conditioning in terms of a Hilbert space operator $C_{X|Y}: \cH_{\ell} \rightarrow \cH_k$ called the \textit{Conditional Mean Operator} (CMO). $C_{X|Y}$ satisfies by definition $C_{X|Y}\ell_y = \mu_{X|Y=y}$ and $C_{X|Y}^\top f = \EE_X[f(X)|Y=\cdot]\,, \forall f\in\cH_k$, where $C_{X|Y}^\top$ denotes the adjoint of $C_{X|Y}$. Plus, the CMO admits expression $C_{X|Y} = C_{XY} C_{YY}^{-1}$, provided $\ell_y \in \Range{C_{YY}},\,\forall y\in \cY$~\cite{fukumizu2004dimensionality, song2013kernel}. \citet{song2009hilbert} show that a nonparametric empirical form of the CMO writes
\begin{equation}\label{eq:empirical-cmo}
    \hat{C}_{X|Y} = \bfPhi_\bfx(\bfL_{\bfy\bfy} + N\lambda {\bf I}_N)^{-1}\bfPsi_\bfy^\top,
\end{equation}
where $\lambda > 0$ is some regularisation ensuring the empirical operator is globally defined and bounded.

As observed by \citet{grunewalder2012conditional}, since $C_{X|Y}$ defines a mapping from $\cH_\ell$ to $\cH_k$, it can be interpreted as the solution to a vector-valued regression problem. This perspective enables derivation of probabilistic convergence bounds on the empirical CMO estimator~\cite{grunewalder2012conditional, singh2019kernel}.

\paragraph*{Deconditional Mean Embeddings} Introduced by \citet{hsu19a} as a new class of embeddings, \textit{Deconditional Mean Embeddings} (DME) are natural counterparts of CMEs. While CME $\mu_{X|Y=y}\in\cH_k$ allows to take the conditional expectation of any $f\in\cH_k$ through inner product $\EE_X[f(X)|Y=y] = \langle f, \mu_{X|Y=y}\rangle_k$, the DME denoted $\mu_{X=x|Y}\in\cH_\ell$ solves the inverse problem\footnote{the slightly unusual notation $\mu_{X=x|Y}$ is taken from \citet{hsu19a} and is meant to contrast the usual conditioning $\mu_{X|Y=y}$} and allows to recover the initial function of which the conditional expectation was taken, through inner product $\langle \EE_X[f(X)|Y=\cdot], \mu_{X=x|Y}\rangle_\ell = f(x)$.

The associated Hilbert space operator, the \textit{Deconditional Mean Operator} (DMO), is thus defined as the operator $D_{X|Y}:\cH_k\rightarrow\cH_\ell$ such that $D_{X|Y}^\top\EE_X[f(X)|Y=\cdot] = f\,,\forall f\in\cH_k$. It admits an expression in terms of CMO and cross-covariance operators $D_{X|Y} = (C_{X|Y}C_{YY})^\top (C_{X|Y}C_{YY}(C_{X|Y})^\top)^{-1}$ provided $\ell_y\in\Range{C_{YY}}$ and $k_x\in\Range{C_{X|Y}C_{YY}C_{X|Y}^\top}\,,\forall y\in\cY$ and $\forall x\in\cX$. A nonparametric empirical estimate of the DMO using datasets $\cD_1$ and $\cD_2$ as described above, is given by $\hat D_{X|Y} = \bfPsi_{\tilde\bfy}\left({\bf A}^\top \bfK_{\bfx\bfx} {\bf A} + M\epsilon {\bf I}_M\right)^{-1}{\bf A}^\top \bfPhi_\bfx^\top$ where $\epsilon > 0$ is a regularisation term and ${\bf A}:=(\bfL_{\bfy\bfy} + N\lambda {\bf I})^{-1}\bfL_{\bfy\tilde\bfy}$ can be interpreted as a mediation operator. Applying the DMO to expected responses $\tilde\bfz$, \citet{hsu19a} are able to recover an estimate of $f$ as 
\begin{equation}\label{eq:empirical-dmo-applied}
\hat f(x) = k(x, \bfx){\bf A}\left({\bf A}^\top \bfK_{\bfx\bfx} {\bf A} + M\epsilon {\bf I}_M\right)^{-1}\tilde\bfz.
\end{equation}
Note that since separate samples $\tilde{\bfy}$ can be used to estimate $C_{YY}$, this naturally fits a mediating variables setup where $\bfx$ and the conditional means $\tilde\bfz$ are not jointly observed.

\section{Deconditioning with Gaussian processes}
\label{section:deconditioning-with-gps}

In this section, we introduce \textit{Conditional Mean Process} (CMP), a stochastic process stemming from the conditional expectation of a GP. We provide a characterisation of the CMP and show that the corresponding posterior mean of its integrand is a DME-based estimator. We also derive in Appendix~\ref{appendix:variational} a variational formulation of our model that scales to large datasets and demonstrate its performance in Section~\ref{section:experiments}.

For simplicity, we put aside observations bagging in Sections~\ref{subsection:cmp} and \ref{subsection:deconditional-posterior}, our contributions being relevant to the general theory of DMEs. We return to a bagged formalism in Section~\ref{subsection:shrinkage-cmo} and extend deconditioning to the multiresolution setup inherent to the mediated statistical downscaling application. In what follows, $\cX$ is a measurable space, $\cY$ a Borel space and feature maps $k_x$ and $\ell_y$ are Borel-measurable functions for any $x\in\cX$, $y\in\cY$. All proofs and derivations of the paper are included in the appendix.

\subsection{Conditional Mean Process}\label{subsection:cmp}
Bayesian quadrature~\cite{larkin1972gaussian, briol2019probabilistic, rasmussen2005gaussian} is based on the observation that the integral of a GP with respect to some marginal measure is a Gaussian random variable. We start by probing the implications of integrating with respect to conditional distribution $\PP_{X|Y=y}$ and considering such integrals as functions of the conditioning variable $y$. This gives rise to the notion of conditional mean processes.

\begin{definition}[Conditional Mean Process]
Let $f \sim \cG\cP (m, k)$ with integrable sample paths, i.e.\ $\int_\cX |f|\d\PP_X < \infty$ a.s. The CMP induced by $f$ with respect to \ $\PP_{X|Y}$ is defined as the stochastic process $\left\{g(y)\,:\,y\in\mathcal Y\right\}$ given by 
    \begin{equation}
        g(y) = \EE_X[f(X)|Y=y] = \int_\cX f(x) \d\PP_{X|Y=y}(x).
    \end{equation}
\end{definition}
By linearity of the integral, it is clear that $g(y)$ is a Gaussian random variable for each $y\in\mathcal Y$. The sample paths integrability requirement ensures $g$ is well-defined almost everywhere. The following result characterizes CMP as a GP on $\cY$.

\begin{proposition}[CMP characterization]\label{proposition:cmp-characterization} Suppose $\EE_X[|m(X)|] < \infty$ and $\EE_X[\|k_X\|_k] < \infty$ and let $(X', Y')\sim\PP_{XY}$. Then $g$ is a Gaussian process $g\sim\cG\cP(\nu, q)$ a.s.\ , specified by
\begin{equation}
    \nu(y) = \EE_X[m(X)|Y=y] \qquad\quad     q(y, y')  = \EE_{X, X'}[k(X, X')|Y=y, Y'=y']\label{eq:cmp-covar-function}
\end{equation}
$\forall y, y'\in\cY$. Furthermore,  $q(y, y') = \langle \mu_{X|Y=y}, \mu_{X|Y=y'}\rangle_k$ a.s.\
\end{proposition}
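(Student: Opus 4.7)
The plan is to verify the three defining properties of a Gaussian process for $g$ (finite-dimensional Gaussianity, the mean formula, and the covariance formula), and then separately verify the RKHS identity for $q$.

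First, I would establish finite-dimensional Gaussianity. For any finite collection $y_1,\dots,y_n \in \cY$ and scalars $a_1,\dots,a_n\in\RR$, observe that $\sum_{i=1}^n a_i g(y_i) = \int_\cX f(x)\,\d\mu(x)$ where $\mu := \sum_{i=1}^n a_i \PP_{X|Y=y_i}$ is a finite signed measure on $\cX$. Since $f$ is a Gaussian process with a.s.\ integrable sample paths, each such integral is a limit in probability of Gaussian linear combinations of finitely many $f(x)$-evaluations (via a standard Riemann-type approximation of the signed measure), hence Gaussian. Thus $(g(y_1),\dots,g(y_n))$ is jointly Gaussian, which is the defining property of a GP once we show the mean and covariance have the claimed forms.

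Next, for the mean and covariance, I would use Fubini/Tonelli. The hypothesis $\EE_X[|m(X)|]<\infty$ and Jensen give $\EE \int_\cX |f(x)| \d\PP_{X|Y=y}(x) \le \int |m(x)| \d\PP_{X|Y=y}(x) + \int \sqrt{k(x,x)}\,\d\PP_{X|Y=y}(x)$, with the second term finite by $\EE_X[\|k_X\|_k] < \infty$ (noting $\|k_X\|_k = \sqrt{k(X,X)}$) after conditioning; this justifies exchanging expectation and integration to obtain
\begin{equation*}
    \EE[g(y)] = \int_\cX \EE[f(x)]\,\d\PP_{X|Y=y}(x) = \EE_X[m(X)|Y=y] = \nu(y).
\end{equation*}
For the covariance, let $(X',Y')$ be an independent copy of $(X,Y)$. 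Then by Fubini applied to $(f(x)-m(x))(f(x')-m(x'))$, which is integrable against $\PP_{X|Y=y}\otimes \PP_{X'|Y'=y'}$ in expectation thanks to the same moment bounds,
\begin{equation*}
    \operatorname{Cov}(g(y),g(y')) = \int_\cX\!\!\int_\cX k(x,x')\,\d\PP_{X|Y=y}(x)\,\d\PP_{X'|Y'=y'}(x') = q(y,y').
\end{equation*}

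For the final identity $q(y,y') = \langle \mu_{X|Y=y}, \mu_{X|Y=y'}\rangle_k$, I would invoke Bochner integrability: the condition $\EE_X[\|k_X\|_k]<\infty$ ensures $k_X$ is Bochner-integrable with respect to $\PP_{X|Y=y}$ for $\PP_Y$-a.e.\ $y$, so $\mu_{X|Y=y} = \EE_X[k_X|Y=y]$ is a well-defined element of $\cH_k$. Bochner integration commutes with bounded linear functionals (in particular, with inner products against fixed RKHS elements), so
\begin{equation*}
    \langle \mu_{X|Y=y}, \mu_{X|Y=y'}\rangle_k = \EE_X\EE_{X'}[\langle k_X, k_{X'}\rangle_k \mid Y=y, Y'=y'] = \EE_{X,X'}[k(X,X')\mid Y=y, Y'=y'],
\end{equation*}
which equals $q(y,y')$ by the reproducing property. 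The \enquote{a.s.} qualifier absorbs the $\PP_Y$-null set where these representations may fail.

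I expect the main obstacle to be the measurability and a.s.\ well-definedness of $g$ as a random function on all of $\cY$, rather than the finite-dimensional computations. The sample-path integrability hypothesis handles existence of $g(y)$ for each fixed $y$ on a full-measure event, but obtaining a single almost sure event on which $g$ is simultaneously defined as a GP requires Borel-measurability of $y\mapsto \PP_{X|Y=y}$ (available since $\cY$ is Borel) plus a standard separability/version argument to construct a jointly measurable modification. The Fubini exchanges and the identification with the RKHS inner product are then routine given the stated moment conditions.
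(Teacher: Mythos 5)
Your overall structure matches the paper's: (i) show each $g(y)$, and more generally finite linear combinations, are Gaussian; (ii) compute the mean and covariance by a Fubini exchange justified by $\EE[|f(x)f(x')|]\leq\|k_x\|_k\|k_{x'}\|_k$ and the fact that $\EE[\|k_X\|_k]<\infty$ implies $\EE[\|k_X\|_k\mid Y=y]<\infty$ for $\PP_Y$-a.e.\ $y$ (the paper isolates this as a small lemma); (iii) identify $q(y,y')$ with $\langle\mu_{X|Y=y},\mu_{X|Y=y'}\rangle_k$ by pulling the inner product through the Bochner integral. Steps (ii) and (iii) are essentially identical to the paper's argument.

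The one place where you genuinely diverge is step (i), and it is also where your argument has a gap. You assert that $\int_\cX f\,\d\mu$ is a limit in probability of finite Gaussian combinations $\sum_j c_j f(x_j)$ \enquote{via a standard Riemann-type approximation of the signed measure}. But $\cX$ here is only a measurable space: there is no topology, and sample paths of $f$ are assumed integrable, not continuous, so Riemann-type sums $\sum_j f(x_j)\mu(A_j)$ need not converge to the integral. The paper avoids this entirely with an $L^2(\Omega,\PP)$ duality argument: let $\cS(f)$ be the closed linear span of $\{f(x):x\in\cX\}$ in $L^2(\Omega,\PP)$; for any $T\in\cS(f)^\perp$, Fubini (justified by the same moment bounds you already use) gives $\EE[Tg(y)]=\int\EE[Tf(x)]\,\d\PP_{X|Y=y}(x)=0$, so $g(y)\in(\cS(f)^\perp)^\perp=\cS(f)$; combined with finiteness of the variance (which you compute anyway in step (ii)), this yields Gaussianity of $g(y)$ and of all finite linear combinations, since $\cS(f)$ is a Gaussian Hilbert space. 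Your proof becomes complete if you replace the Riemann-approximation claim with this orthogonality argument; everything else you wrote goes through, and your closing remarks about measurability of $y\mapsto\PP_{X|Y=y}$ correctly identify the remaining technical point (the paper handles it via existence of a regular conditional distribution on Borel spaces).
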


Intuitively, the CMP can be understood as a GP on the conditional means where its covariance $q(y, y')$ is induced by the similarity between the CMEs at $y$ and $y'$. Resorting to the kernel $\ell$ defined on $\cY$, we can reexpress the covariance using Hilbert space operators as $q(y, y') = \langle C_{X|Y}\ell_y, C_{X|Y}\ell_{y'}\rangle_k$. A natural nonparametric estimate of the CMP covariance thus comes using the CMO estimator from (\ref{eq:empirical-cmo}) as $\hat q(y, y') = \ell(y, \bfy)\left(\bfL_{\bfy\bfy} + N\lambda {\bf I}_N\right)^{-1}\bfK_{\bfx\bfx} \left(\bfL_{\bfy\bfy} + N\lambda {\bf I}_N\right)^{-1}\ell(\bfy, y')$. When $m\in\cH_k$, the mean function can be written as $\nu(y) = \langle \mu_{X|Y=y}, m\rangle_k$ for which we can also use empirical estimate $\hat \nu(y) = \ell(y, \bfy)\left(\bfL_{\bfy\bfy} + N\lambda {\bf I}_N\right)^{-1}\bfPhi_\bfx^\top m$. Finally, one can also quantify the covariance between the CMP $g$ and its integrand $f$, i.e.\ $\operatorname{Cov}(f(x), g(y)) = \EE_X[k(x, X)|Y=y]$. Under the same assumptions as Proposition~\ref{proposition:cmp-characterization}, this covariance can be expressed using mean embeddings, i.e.\ $\operatorname{Cov}(f(x), g(y)) = \langle k_x, \mu_{X|Y=y}\rangle_k$ and admits empirical estimate $k(x, \bfx)\left(\bfL_{\bfy\bfy} + N\lambda {\bf I}_N\right)^{-1}\ell(\bfy, y)$.

\subsection{Deconditional Posterior}\label{subsection:deconditional-posterior}
Given independent observations introduced above, $\cD_1 = \{\bfx, \bfy\}$ and $\cD_2 = \{\tilde\bfy, \tilde\bfz\}$, we may now consider an additive noise model with CMP prior on aggregate observations $\tilde\bfz|\tilde\bfy\sim\cN(g(\tilde\bfy), \sigma^2{\bf I}_M)$. Let $\bfQ_{\tilde\bfy\tilde\bfy} := q(\tilde\bfy, \tilde\bfy)$ be the kernel matrix induced by $q$ on $\tilde\bfy$ and let $\boldupsilon := \operatorname{Cov}(f(\bfx), \tilde\bfz) = \bfPhi_\bfx^\top C_{X|Y}\bfPsi_{\tilde\bfy}$ be the cross-covariance between $f(\bfx)$ and $\tilde\bfz$. The joint normality of $f(\bfx)$ and $\tilde\bfz$ gives
\begin{equation}\label{eq:joint-gp-and-cmp}
    \begin{bmatrix}
    f(\bfx) \\
    \tilde\bfz
    \end{bmatrix}\mid {\bfy, \tilde\bfy}\sim\cN\left(\begin{bmatrix}
    m(\bfx)\\
    \nu(\tilde\bfy)
    \end{bmatrix}, \begin{bmatrix}
     \bfK_{\bfx\bfx} & \boldupsilon \\
    \boldupsilon^\top & \bfQ_{\tilde\bfy\tilde\bfy} + \sigma^2{\bf I}_M
    \end{bmatrix}\right).
\end{equation}
Using Gaussian conditioning, we can then readily derive the posterior distribution of the underlying GP field $f$ given the  aggregate observations $\tilde\bfz$ corresponding to $\tilde\bfy$. The latter can naturally be degenerated if observations are paired, i.e. $\bfy=\tilde\bfy$. This formulation can be seen as an example of the inter-domain GP~\cite{rudner2020inter}, where we utilise the observed conditional means $\tilde\bfz$ as inter-domain inducing features for inference of $f$.

\begin{proposition}[Deconditional Posterior]\label{proposition:deconditional-posterior}
    Given aggregate observations $\tilde\bfz$ with homoscedastic noise $\sigma^2$, the deconditional posterior of $f$ is defined as the Gaussian process $f|\tilde\bfz\sim\cG\cP(m_\rmd, k_\rmd)$ where
    \begin{align}
        m_\rmd(x) & = m(x) + k_x^\top C_{X|Y}\bfPsi_{\tilde\bfy} (\bfQ_{\tilde\bfy\tilde\bfy} + \sigma^2{\bf I}_M)^{-1}(\tilde\bfz - \nu(\tilde\bfy)),\\
        k_\rmd(x, x') & = k(x, x') - k_x^\top C_{X|Y}\bfPsi_{\tilde\bfy} (\bfQ_{\tilde\bfy\tilde\bfy} + \sigma^2{\bf I}_M)^{-1}\bfPsi_{\tilde\bfy}^\top C_{X|Y}^\top k_{x'}.
    \end{align}
\end{proposition}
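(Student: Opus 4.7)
The plan is to reduce the statement to standard Gaussian conditioning, the main content being the verification that for any finite set of test points the joint distribution with the aggregate observations is Gaussian with the required moments.

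First I would extend the joint distribution in (\ref{eq:joint-gp-and-cmp}) from the training points $\bfx$ to an arbitrary finite collection of test points $\{x_1,\ldots,x_n\}\subset\cX$. Since the CMP characterization of Proposition~\ref{proposition:cmp-characterization} establishes $g\sim\cG\cP(\nu,q)$ almost surely, and $g(\tilde y_j)$ is obtained as a linear functional of the same underlying GP $f$ (namely integration against $\PP_{X|Y=\tilde y_j}$), the vector $(f(x_1),\ldots,f(x_n), g(\tilde y_1),\ldots,g(\tilde y_M))$ is jointly Gaussian. Adding the independent noise $\varepsilon\sim\cN(0,\sigma^2{\bf I}_M)$ from the observation model preserves Gaussianity and yields the augmented joint distribution.

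Next I would assemble the covariance structure. The diagonal blocks are immediate: $\operatorname{Cov}(f(x_i),f(x_j))=k(x_i,x_j)$ by the GP prior, and $\operatorname{Cov}(\tilde z_i,\tilde z_j)=q(\tilde y_i,\tilde y_j)+\sigma^2\delta_{ij}$ by the CMP covariance in (\ref{eq:cmp-covar-function}) plus the noise. The off-diagonal block follows from the cross-covariance expression discussed at the end of Section~\ref{subsection:cmp}: $\operatorname{Cov}(f(x), g(\tilde y_j))=\EE_X[k(x,X)\mid Y=\tilde y_j]=\langle k_x,\mu_{X|Y=\tilde y_j}\rangle_k=k_x^\top C_{X|Y}\ell_{\tilde y_j}$, which stacks into $k_x^\top C_{X|Y}\bfPsi_{\tilde\bfy}$; this matches the cross-covariance vector $\boldupsilon$ defined just before the proposition. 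Independence of $\varepsilon$ from $f$ means the noise makes no contribution here.

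Finally, applying the standard Gaussian conditioning identities to this $(n+M)$-dimensional joint distribution gives the conditional mean
\begin{equation*}
    m(x_i)+k_{x_i}^\top C_{X|Y}\bfPsi_{\tilde\bfy}(\bfQ_{\tilde\bfy\tilde\bfy}+\sigma^2{\bf I}_M)^{-1}(\tilde\bfz-\nu(\tilde\bfy))
\end{equation*}
and conditional cross-covariance
\begin{equation*}
    k(x_i,x_j)-k_{x_i}^\top C_{X|Y}\bfPsi_{\tilde\bfy}(\bfQ_{\tilde\bfy\tilde\bfy}+\sigma^2{\bf I}_M)^{-1}\bfPsi_{\tilde\bfy}^\top C_{X|Y}^\top k_{x_j}
\end{equation*}
for each pair of test points, which are exactly $m_\rmd(x_i)$ and $k_\rmd(x_i,x_j)$. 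Since these finite-dimensional conditionals are Gaussian with consistent mean and covariance functions across any choice of $\{x_1,\ldots,x_n\}$, Kolmogorov's extension theorem gives $f\mid\tilde\bfz\sim\cG\cP(m_\rmd,k_\rmd)$. The only non-routine step is the joint Gaussianity of $(f(x), g(\tilde\bfy))$ at arbitrary $x$, which I would justify by invoking Proposition~\ref{proposition:cmp-characterization} together with the linearity of conditional expectation on the prior sample paths.
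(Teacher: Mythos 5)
Your proposal is correct and follows essentially the same route as the paper's proof: assemble the joint Gaussian distribution of $(f(\bfx),\tilde\bfz)$ with the cross-covariance block $\boldupsilon=\bfPhi_\bfx^\top C_{X|Y}\bfPsi_{\tilde\bfy}$, apply standard Gaussian conditioning, and invoke Kolmogorov's extension theorem to pass from arbitrary finite collections of test points to a GP statement. The only difference is that you spell out the justification of joint Gaussianity via Proposition~\ref{proposition:cmp-characterization} and linearity of the conditional expectation, which the paper simply takes as given from the display preceding the proposition.
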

Substituting terms by their empirical forms, we can define a nonparametric estimate of the $m_\rmd$ as
    \begin{equation}\label{eq:deconditional-posterior-mean}
        \hat{m}_\rmd(x)  := m(x) + k(x, \bfx){\bf A}(\hat\bfQ_{\tilde\bfy\tilde\bfy} + \sigma^2{\bf I}_M)^{-1}(\tilde\bfz - \hat\nu(\tilde\bfy)))
    \end{equation}
which, when $m=0$, reduces to the DME-based estimator in (\ref{eq:empirical-dmo-applied}) by taking the noise variance $\frac{\sigma^2}{N}$ as the inverse regularization parameter. \citet{hsu19a} recover a similar posterior mean expression in their Bayesian interpretation of DME. However, they do not link the distributions of $f$ and its CMP, which leads to much more complicated chained inference derivations combining fully Bayesian and MAP estimates, while we naturally recover it using simple Gaussian conditioning.

Likewise, an empirical estimate of the deconditional covariance is given by
    \begin{equation}\label{eq:deconditional-posterior-covariance}
        \hat k_\rmd(x, x') := k(x, x') - k(x, \bfx){\bf A}(\hat\bfQ_{\tilde\bfy\tilde\bfy} + \sigma^2{\bf I}_M)^{-1}{\bf A}^\top k(\bfx, x').
    \end{equation}
Interestingly, the latter can be rearranged to write as the difference between the original kernel and the kernel undergoing conditioning and deconditioning steps $\hat k_\rmd(x, x') = k(x, x') - \langle k_x, \hat D_{X|Y}\hat C_{X|Y} k_{x'}\rangle_k$. This can be interpreted as a measure of reconstruction quality, which degenerates in the case of perfect deconditioning, i.e.\ $\hat D_{X|Y}\hat C_{X|Y} = \operatorname{Id}_{\cH_k}$.

\subsection{Deconditioning and multiresolution data}\label{subsection:shrinkage-cmo}

Downscaling application would typically correspond to multiresolution data, with bag dataset $^b\!\cD_1 = \{(^b\!\boldx_j, y_j)\}_{j=1}^N$ where $^b\!\boldx_j = \{x_j^{(i)}\}_{i=1}^{n_j}$. In this setup, the mismatch in size between vector concatenations $\bfx= [x_1^{(1)} \; \hdots \; x_N^{(n_N)}]^\top$ and $\bfy = \begin{bmatrix} y_1 & \hdots & y_N\end{bmatrix}^\top$ prevents from readily applying (\ref{eq:empirical-cmo}) to estimate the CMO and thus infer the deconditional posterior. There is, however, a straightforward approach to alleviate this: simply replicate bag-level covariates $y_j$ to match bags sizes $n_j$. Although simple, this method incurs a $\cO((\sum_{j=1}^N n_j)^3)$ cost due to matrix inversion in (\ref{eq:empirical-cmo}). 

Alternatively, since bags $^b\!\boldx_j$ are sampled from conditional distribution $\PP_{X|Y=y_j}$, unbiased Monte Carlo estimators of CMEs are given by $\hat\mu_{X|Y=y_j} = \frac{1}{n_j}\sum_{i=1}^{n_j}k_{x_j^{(i)}}$. Let ${\bf \hat M}_{\bfy} = [\hat\mu_{X|Y=y_1} \hdots \hat\mu_{X|Y=y_N}]^\top$ denote their concatenation along columns. We can then rewrite the cross-covariance operator as $C_{XY} = \EE_Y[\EE_{X}[k_X|Y]\otimes\ell_Y]$ and hence take $\frac{1}{N}{\bf \hat M}_{\bfy}\bfPsi_\bfy^\top$ as an estimate for $C_{XY}$. Substituting empirical forms into $C_{X|Y} = C_{X|Y}C_{YY}^{-1}$ and applying Woodbury identity, we obtain an alternative CMO estimator that only requires inversion of a $N \times N$ matrix. We call it \emph{Conditional Mean Shrinkage Operator} and define it as 
\begin{equation}
    ^S\hat{C}_{X|Y} := {\bf\hat M}_\bfy (\bfL_{\bfy\bfy} + \lambda N{\bf I}_N)^{-1}\bfPsi_\bfy^\top.
\end{equation} This estimator can be seen as a generalisation of the Kernel Mean Shrinkage Estimator~\cite{Muandet2016shrinkage} to the conditional case. We provide in Appendix~\ref{appendix:shrinkage-estimator} modifications of (\ref{eq:deconditional-posterior-mean}) and (\ref{eq:deconditional-posterior-covariance}) including this estimator for the computation of the deconditional posterior.
\section{Deconditioning as regression}
\label{section:deconditioning-as-regression}

In Section~\ref{subsection:deconditional-posterior}, we obtain a DMO-based estimate for the posterior mean of $f|\tilde\bfz$. When the estimate gets closer to the exact operator, the uncertainty collapses and the Bayesian view meets the frequentist. It is however unclear how the empirical operators effectively converge in finite data size. Adopting an alternative perspective, we now demonstrate that the DMO estimate can be obtained as the minimiser of a two-staged vector-valued regression. This frequentist turn enables us to leverage rich theory of vector-valued regression and establish under mild assumptions a convergence rate on the DMO estimator, with conditions to fulfill minimax optimality in terms of statistical-computational efficiency. In the following, we briefly review CMO's vector-valued regression viewpoint and construct an analogous regression problem for DMO. We refer the reader to \cite{paulsen2016introduction} for a comprehensive overview of vector-valued RKHS theory. As for Sections~\ref{subsection:cmp} and \ref{subsection:deconditional-posterior}, this section contributes to the general theory of DMEs and we hence put aside the bag notations.

\paragraph*{Stage 1: Regressing the Conditional Mean Operator}\label{subsection:stage1}
As first introduced by \citet{grunewalder2012conditional} and generalised to infinite dimensional spaces by \citet{singh2019kernel}, estimating $C_{X|Y}^\top$ is equivalent to solving a vector-valued kernel ridge regression problem in the hypothesis space of Hilbert-Schmidt operators from $\cH_k$ to $\cH_\ell$, denoted as  $\operatorname{HS}(\cH_k, \cH_\ell)$. Specifically, we may consider the operator-valued kernel defined over $\cH_k$ as $\Gamma(f, f') := \langle f, f\rangle_k \operatorname{Id}_{\cH_\ell}$. We denote $\cH_\Gamma$ the $\cH_\ell$-valued RKHS spanned by $\Gamma$ with norm $\|\cdot\|_\Gamma$, which can be identified to $\operatorname{HS}(\cH_k, \cH_\ell)$\footnote{$ \cH_\Gamma = \cSpan{\Gamma_fh, f\in\cH_k, h\in\cH_\ell} = \cSpan{f\otimes h, f\in\cH_k, h\in\cH_\ell} = \overline{\cH_k\otimes\cH_\ell}\cong\operatorname{HS}(\cH_k, \cH_\ell)$}. \citet{singh2019kernel} frame CMO regression as the minimisation surrogate discrepancy $\cE_\rmc(C) :=\EE_{X,Y}\left[\|k_X - C^\top\ell_Y\|^2_k\right]$, to which they substitute an empirical regularised version restricted to $\cH_\Gamma$ given by 
\begin{equation}\label{eq:empirical-cmo-regression-objective}
    \hat \cE_\rmc(C) :=\frac{1}{N}\sum_{i=1}^N\|k_{x_i} - C^\top\ell_{ y_i}\|^2_k + \lambda\|C\|^2_\Gamma\,\qquad C\in\cH_\Gamma\qquad\lambda > 0
\end{equation}
This $\cH_k$-valued kernel ridge regression problem admits a closed-form minimiser which shares the same empirical form as the CMO, i.e.\ $\hat C_{X|Y}^\top=\arg\min_{C\in\cH_\Gamma}\,\hat\cE_\rmc(C)$~\cite{grunewalder2012conditional, singh2019kernel}.

\paragraph*{Stage 2 : Regressing the Deconditional Mean Operator}\label{subsection:stage2}
The DMO on the other hand is defined as the operator $D_{X|Y}:\cH_k\rightarrow\cH_\ell$ such that $\forall f\in\cH_k$, $D_{X|Y}^\top C_{X|Y}^\top f = f$. Since deconditioning corresponds to finding a pseudo-inverse to the CMO, it is natural to consider a reconstruction objective $\cE_\rmd(D):= \EE_Y\left[\|\ell_Y - D C_{X|Y}\ell_Y\|^2_\ell\right]$. Introducing a novel characterization of the DMO, we propose to minimise this objective in the hypothesis space of Hilbert-Schmidt operators $\operatorname{HS}(\cH_k, \cH_\ell)$ which identifies to $\cH_\Gamma$. As per above, we denote $\hat{C}_{X|Y}$ the empirical CMO learnt in Stage 1, and we substitute the loss with an empirical regularised formulation on $\cH_\Gamma$
\begin{equation}\label{eq:empirical-dmo-as-regression}
    \hat\cE_\rmd(D) := \frac{1}{M}\sum_{j=1}^M\|\ell_{\tilde y_j} - D\hat{C}_{X|Y}\ell_{\tilde y_j}\|^2_{\ell} + \epsilon\|D\|^2_\Gamma\qquad D\in\cH_\Gamma\qquad\epsilon >0
\end{equation}

\begin{proposition}[Empirical DMO as vector-valued regressor]\label{proposition:dmo-as-regression}
    The minimiser of the empirical reconstruction risk is the empirical DMO, i.e. $\hat{D}_{X|Y} = \arg\min_{D\in\cH_{\Gamma}} {\hat\cE_\rmd(D)}$
\end{proposition}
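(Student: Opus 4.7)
The plan is to cast~(\ref{eq:empirical-dmo-as-regression}) as a vector-valued kernel ridge regression on the pairs $(u_j,\ell_{\tilde y_j})_{j=1}^M$ with $u_j := \hat{C}_{X|Y}\ell_{\tilde y_j}\in\cH_k$, apply the representer theorem for operator-valued kernels, and then verify that the resulting closed-form solution coincides with the empirical DMO expression $\hat D_{X|Y}=\bfPsi_{\tilde\bfy}({\bf A}^\top\bfK_{\bfx\bfx}{\bf A}+M\epsilon{\bf I}_M)^{-1}{\bf A}^\top\bfPhi_\bfx^\top$ recalled in Section~\ref{section:background}.

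With this rewriting, $\hat\cE_\rmd(D) = \frac{1}{M}\sum_{j=1}^M\|\ell_{\tilde y_j} - Du_j\|_\ell^2 + \epsilon\|D\|_\Gamma^2$ is precisely the regularised empirical risk of a kernel ridge regression in $\cH_\Gamma$ with the separable operator-valued kernel $\Gamma(f,f')=\langle f,f'\rangle_k\operatorname{Id}_{\cH_\ell}$. By the Micchelli--Pontil representer theorem, any minimiser over $\cH_\Gamma$ admits a finite expansion $D=\sum_{j=1}^M \Gamma_{u_j}\alpha_j$ with $\alpha_j\in\cH_\ell$; under the identification $\cH_\Gamma\cong\operatorname{HS}(\cH_k,\cH_\ell)$, this corresponds to the Hilbert--Schmidt operator $f\mapsto\sum_j\langle f,u_j\rangle_k\alpha_j$, which has $\|D\|_\Gamma^2=\sum_{i,j}\langle u_i,u_j\rangle_k\langle\alpha_i,\alpha_j\rangle_\ell$ by the reproducing property of $\Gamma$.

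Substituting the empirical CMO $\hat C_{X|Y}=\bfPhi_\bfx(\bfL_{\bfy\bfy}+N\lambda{\bf I}_N)^{-1}\bfPsi_\bfy^\top$ gives $u_j=\bfPhi_\bfx{\bf A}_{:,j}$ and hence the scalar Gram matrix ${\bf U}:=[\langle u_i,u_j\rangle_k]_{i,j}={\bf A}^\top\bfK_{\bfx\bfx}{\bf A}$. The empirical risk thereby becomes a finite-dimensional strictly convex functional in $(\alpha_1,\ldots,\alpha_M)\in\cH_\ell^M$. I would take the Fréchet derivative in each $\alpha_k$, collect the first-order conditions into the single normal equation ${\bf U}\bigl(({\bf U}+M\epsilon{\bf I}_M)\alpha - (\ell_{\tilde y_1},\ldots,\ell_{\tilde y_M})^\top\bigr)=0$ in $\cH_\ell^M$, and retain the canonical solution $\alpha=({\bf U}+M\epsilon{\bf I}_M)^{-1}(\ell_{\tilde y_1},\ldots,\ell_{\tilde y_M})^\top$. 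Plugging $\alpha$ back into the representer expansion and simplifying the resulting $\sum_j \alpha_j\langle u_j,\cdot\rangle_k$ recovers exactly $\hat D_{X|Y}$.

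The main obstacle is really a bookkeeping one: pinning down the identification $\cH_\Gamma\cong\operatorname{HS}(\cH_k,\cH_\ell)$ so that each generator $\Gamma_f h$ is identified with the rank-one operator $f'\mapsto\langle f',f\rangle_k h$ and $\|\cdot\|_\Gamma$ agrees with the Hilbert--Schmidt norm. Once this identification is in place, the representer theorem applies directly because $\hat\cE_\rmd(D)$ depends on $D$ only through the evaluations $\{Du_j\}_{j=1}^M$ and through $\|D\|_\Gamma$, and the remaining computation reduces to routine linear algebra.
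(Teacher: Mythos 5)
Your proposal is correct and follows essentially the same route as the paper's proof: both invoke the Micchelli--Pontil representer theorem for the separable operator-valued kernel $\Gamma$, identify the scalar Gram matrix $[\langle \hat C_{X|Y}\ell_{\tilde y_i},\hat C_{X|Y}\ell_{\tilde y_j}\rangle_k]_{i,j}={\bf A}^\top\bfK_{\bfx\bfx}{\bf A}=\hat\bfQ_{\tilde\bfy\tilde\bfy}$, solve the resulting $M\times M$ linear system for the $\cH_\ell$-valued coefficients, and substitute back to recover $\hat D_{X|Y}$. The only cosmetic difference is that the paper reads the coefficient equation $(\hat\bfQ_{\tilde\bfy\tilde\bfy}+M\epsilon{\bf I}_M){\bf c}^\top=\bfPsi_{\tilde\bfy}^\top$ directly off the cited theorem rather than deriving first-order conditions by hand (where, strictly, the objective is only strictly convex in $D$, not in the coefficients when the Gram matrix is singular, though your canonical solution yields the correct unique minimiser).
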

Since it requires to estimate the CMO first, minimising (\ref{eq:empirical-dmo-as-regression}) can be viewed as a two-staged vector value regression problem.

\paragraph*{Convergence results}
Following the footsteps of \cite{singh2019kernel, Szabo2016}, this perspective enables us to state the performance of the DMO estimate in terms of asymptotic convergence of the objective $\cE_\rmd$. As in \citet{caponnetto2007optimal}, we must restrict the class of probability measure for $\PP_{XY}$ and $\PP_Y$ to ensure uniform convergence even when $\cH_k$ is infinite dimensional. The family of distribution considered is a general class of priors that does not assume parametric distributions and is parametrized by two variables: $b > 1$ controls the effective input dimension and $c\in]1, 2]$ controls functional smoothness. Mild regularity assumptions are also placed on the original spaces $\cX, \cY$, their corresponding RKHS $\cH_k, \cH_\ell$ and the vector-valued RKHS $\cH_\Gamma$. We discuss these assumptions in details in Appendix~\ref{appendix: details on convergence result}. Importantly, while $\cH_k$ can be infinite dimensional, we nonetheless have to assume the RKHS $\cH_\ell$ is finite dimensional. In further research, we will relax this assumption.

\begin{theorem}[Empirical DMO Convergence Rate]\label{theorem:convergence-rate} Denote $D_{\PP_Y} = \arg\min_{D\in\cH_\Gamma}\cE_\rmd(D)$. Assume assumptions stated in Appendix~\ref{appendix: details on convergence result} are satisfied. In particular, let $(b, c)$ and $(0, c')$ be the parameters of the restricted class of distribution for $\PP_{Y}$ and $\PP_{XY}$ respectively and let $\iota\in]0, 1]$ be the Hölder continuity exponent in $\cH_\Gamma$. Then, if we choose $\lambda = N^{-\frac{1}{c' + 1}}$, $N = M^{\frac{a(c' + 1)}{\iota(c' - 1)}}$ where $a > 0$, we have the following result,
\begin{itemize}
    \item If $a\leq\frac{b(c+1)}{bc+1}$, then $\cE_\rmd(\hat D_{X|Y}) - \cE_\rmd(D_{\PP_Y}) = \cO(M^\frac{-ac}{c+1})$  with $\epsilon=M^\frac{-a}{c+1}$
    \item If $a\geq\frac{b(c+1)}{bc+1}$, then $\cE_\rmd(\hat D_{X|Y}) - \cE_\rmd(D_{\PP_Y}) = \cO(M^\frac{-bc}{bc+1})$  with $\epsilon=M^\frac{-b}{bc+1}$
\end{itemize}
\end{theorem}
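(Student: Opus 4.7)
The plan is to adapt the two-stage vector-valued regression analysis developed by \citet{singh2019kernel} and \citet{Szabo2016} for distribution regression to our deconditioning setting. The central strategy is to decompose the excess reconstruction risk via an oracle Stage 2 estimator $\tilde D := \arg\min_{D\in\cH_\Gamma} \bigl\{ \frac{1}{M}\sum_j \|\ell_{\tilde y_j} - D C_{X|Y}\ell_{\tilde y_j}\|_\ell^2 + \epsilon\|D\|_\Gamma^2 \bigr\}$ that uses the \emph{true} CMO rather than $\hat C_{X|Y}$, namely
\[
\cE_\rmd(\hat D_{X|Y}) - \cE_\rmd(D_{\PP_Y}) = \bigl[\cE_\rmd(\hat D_{X|Y}) - \cE_\rmd(\tilde D)\bigr] + \bigl[\cE_\rmd(\tilde D) - \cE_\rmd(D_{\PP_Y})\bigr],
\]
so that the Stage 2 sampling error is isolated from the Stage 1 propagation error. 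Each piece is then handled with an adapted tool from the literature.

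For the Stage 2 term, the oracle problem is a genuine $\cH_\ell$-valued kernel ridge regression in $\cH_\Gamma$ with target $D_{\PP_Y}$, sample size $M$, regulariser $\epsilon$, source condition governed by $c$, and effective dimension controlled by $b$. A direct application of the bias-variance analysis of \citet{caponnetto2007optimal} yields a bound of the form $\cO(\epsilon^c + M^{-1}\epsilon^{-1/b})$, whose optimum at $\epsilon \asymp M^{-b/(bc+1)}$ recovers the minimax rate $\cO(M^{-bc/(bc+1)})$.

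For the Stage 1 propagation term, I would express $\hat D_{X|Y}$ and $\tilde D$ through their closed-form $\epsilon$-regularised resolvents and bound $\cE_\rmd(\hat D_{X|Y}) - \cE_\rmd(\tilde D)$ in terms of the operator-norm deviation $\|\hat C_{X|Y} - C_{X|Y}\|^\iota$, using the assumed Hölder continuity of exponent $\iota$ in $\cH_\Gamma$. The CMO convergence rate established in the vector-valued regression framework of \citet{singh2019kernel} under smoothness $c'$ with $\lambda = N^{-1/(c'+1)}$ then provides a decay polynomial in $N^{-(c'-1)/(c'+1)}$, raised to the power $\iota$. The ansatz $N = M^{a(c'+1)/(\iota(c'-1))}$ is engineered precisely so that this contribution equals $\cO(M^{-a})$ up to constants.

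Balancing the two contributions produces the two regimes. When $a \leq b(c+1)/(bc+1)$ the Stage 1 error dominates, and one cannot take $\epsilon$ smaller than the threshold $M^{-a/(c+1)}$ without the Stage 2 bias $\epsilon^c$ exceeding $M^{-a}$; the resulting rate is $\cO(M^{-ac/(c+1)})$. When $a$ exceeds the threshold, $\epsilon$ can be set to its unrestricted optimum $M^{-b/(bc+1)}$ and the minimax rate is attained; the crossover is precisely where the two candidate values of $\epsilon$ coincide, i.e.\ $a = b(c+1)/(bc+1)$. The main obstacle will be the Stage 1 propagation step: the operator perturbation argument must leverage the Hölder assumption on $\cH_\Gamma$ and the finite-dimensionality of $\cH_\ell$ to avoid spurious $\epsilon^{-1}$ factors that would otherwise spoil the rate, and care is needed when translating a $\cH_\Gamma$-norm deviation of $\hat D_{X|Y}$ into the reconstruction loss $\cE_\rmd$ through the CMO and the $\PP_Y$-induced inner product.
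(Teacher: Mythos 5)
Your outline is mathematically sound, but it takes a genuinely different (and much heavier) route than the paper. The paper does not re-derive the two-stage error decomposition at all: its proof of this theorem consists entirely of restating the result with all hypotheses made explicit (Theorem~\ref{theorem:detailed-convergence-result} in Appendix~\ref{appendix: details on convergence result}) and then verifying, one by one, that the nine assumptions of Theorem~4 of \citet{singh2019kernel} hold in the deconditioning setup --- Polish input spaces, bounded continuous characteristic kernels, finite-dimensionality of $\cH_\ell$, well-specifiedness ($\arg\min\cE_\rmc,\ \arg\min\cE_\rmd\in\cH_\Gamma$), uniform Hilbert--Schmidt boundedness and H\"{o}lder continuity of the family $\{\Gamma_{\mu_{X|Y=y}}\}_{y\in\cY}$, and the $\cP_\Gamma(0,c')$ and $\cP_\Gamma(b,c)$ memberships of $\PP_{XY}$ and $\PP_Y$. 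The rates, the choices of $\lambda$, $N$, $\epsilon$ and the two regimes are then inherited verbatim from the cited theorem. The actual proof effort in the paper lies in verification steps your sketch passes over: recasting the observation model $Z=\EE[f(X)|Y]+\varepsilon$ as a pre-aggregation model $\tilde Z=f(X)+\tilde\varepsilon$ with $\EE[\tilde\varepsilon|Y]=0$, constructing the measures on $\cH_k$ and $\cH_\ell$ by extension, and using $\dim\cH_\ell<\infty$ to obtain $\|\Gamma_{\mu_{X|Y=y}}\|^2_{\operatorname{HS}}=q(y,y)\tr(\operatorname{Id}_{\cH_\ell})<\infty$. Your plan instead reconstructs the internal mechanics of the cited theorem; it would work, but as written the two-term oracle decomposition is coarser than what is actually required (\citet{Szabo2016} and \citet{singh2019kernel} chain through the regularised population minimiser as well, and the residual terms of order $\epsilon^{c-1}/M$ and $1/(M^2\epsilon)$ must be shown to be dominated), and one slip should be corrected: in the first regime it is the Stage~1 propagation term, which scales like $M^{-a}/\epsilon$, that blows up when $\epsilon$ is taken below $M^{-a/(c+1)}$ --- not the bias $\epsilon^{c}$, which only shrinks. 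What your route buys is a self-contained argument exposing exactly where each assumption enters; what the paper's route buys is brevity and safety, at the price of leaning entirely on the external theorem.
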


This theorem underlines a trade-off between the computational and statistical efficiency with respect to the datasets cardinalities $N = |\cD_1|$, $M = |\cD_2|$ and the problem difficulty $b, c, c'$. For $a \leq \frac{b(c+1)}{bc+1}$, smaller $a$ means less samples from $\cD_1$ at fixed $M$ and thus computational savings. But it also hampers convergence, resulting in reduced statistical efficiency. At $a=\frac{b(c+1)}{bc+1} < 2$, convergence rate is a minimax computational-statistical efficiency optimal, i.e.\ convergence rate is optimal with smallest possible $M$. We note that at this optimal, $N > M$ and which means less samples are required from $\cD_2$. $a\geq\frac{b(c+1)}{bc+1}$ does not improve the convergence rate but only increases the size of $\cD_1$ and hence the computational cost it bears.

\section{Deconditional Downscaling Experiments}
\label{section:experiments}

We demonstrate and evaluate our CMP-based downscaling approaches on both synthetic experiments and a challenging atmospheric temperature field downscaling problem with unmatched multi-resolution data. We denote the exact CMP deconditional posterior from Section~\ref{section:deconditioning-with-gps} as \textsc{cmp}, the \textsc{cmp} using with efficient shrinkage CMO estimation as \textsc{s-cmp} and the variational formulation as \textsc{varcmp}. They are compared against \textsc{vbagg}~\cite{LeonLaw2018} --- which we describe below --- and a GP regression~\cite{rasmussen2005gaussian} baseline (\textsc{gpr}) modified to take bags centroids as the input. Experiments are implemented in \emph{PyTorch}~\cite{pytorch, gardner2018gpytorch}, all code and datasets are made available\footnote{\url{https://github.com/shahineb/deconditional-downscaling}} and computational details are provided in Appendix~\ref{appendix:section:experiments}.

\paragraph*{Variational Aggregate Learning} \textsc{vbagg} is introduced by \citet{LeonLaw2018} as a variational aggregate learning framework to disaggregate exponential family models, with emphasis on the Poisson family. We consider its application to the Gaussian family, which models the relationship between aggregate targets $z_j$ and bag covariates $\{x_j^{(i)}\}_i$ by bag-wise averaging of a GP prior on the function of interest. In fact, the Gaussian \textsc{vbagg} corresponds exactly to a special case of \textsc{cmp} on matched data, where the bag covariates are simply one hot encoded indices with kernel $\ell(j, j') = \delta(j, j')$ where $\delta$ is the Kronecker delta. However, \textsc{vbagg} cannot handle unmatched data as bag indices do not instill the smoothness that is used for mediation. For fair analysis, we compare variational methods \textsc{varcmp} and \textsc{vbagg} together, and exact methods \textsc{cmp}/\textsc{s-cmp} to an exact version of \textsc{vbagg}, which we implement and refer to as \textsc{bagg-gp}.

\subsection{Swiss Roll}
The \emph{scikit-learn}~\cite{scikit-learn} swiss roll manifold sampling function allows to generate a 3D manifold of points $x\in\RR^3$ mapped with their position along the manifold $t\in\RR$. Our objective will be to recover $t$ for each point $x$ by only observing $t$ at an aggregate level. In the first experiment, we compare our model to existing weakly supervised spatial disaggregation methods when all high-resolution covariates are matched with a coarse-level aggregate target. We then proceed to withdraw this requirement in a companion experiment.

\begin{wrapfigure}{r}{0.41\textwidth}
    \centering
    \includegraphics[width=0.41\textwidth]{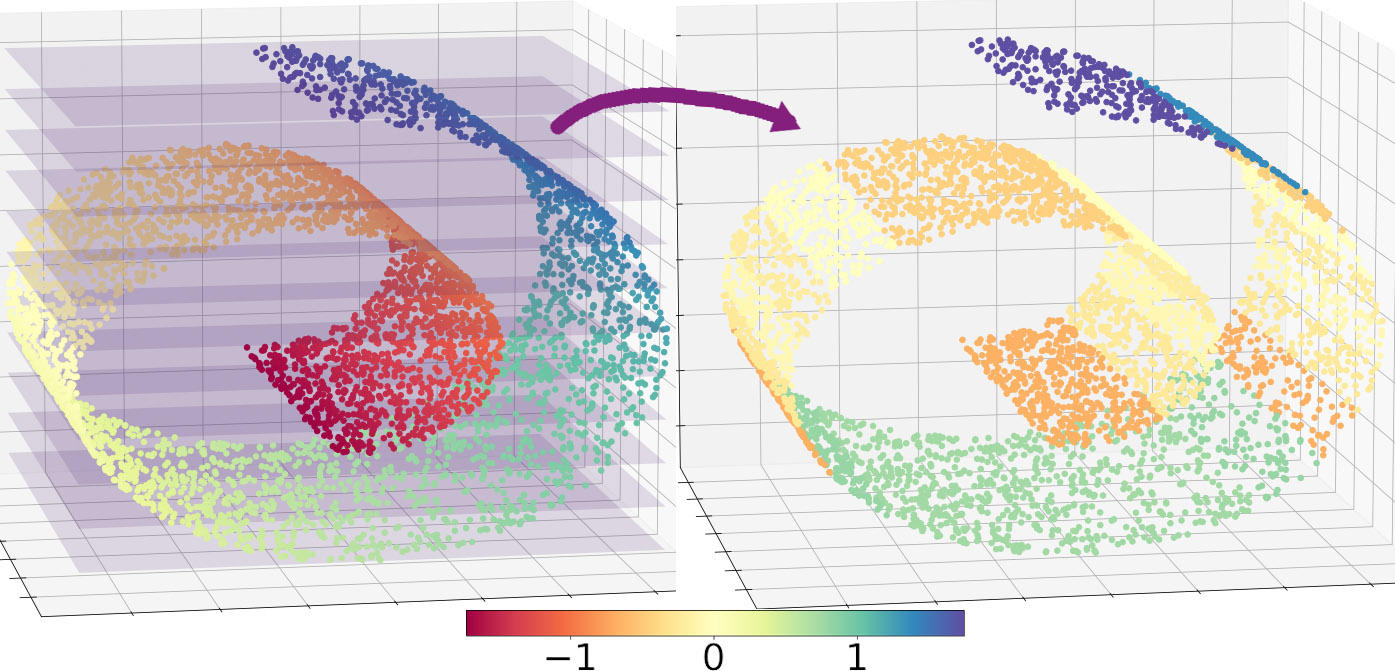}
    \caption{{\bf Step 1:} Split space regularly along height. {\bf Step 2:} Group points into height-level bags. {\bf Step 3:} Average points targets into bag-level aggregate targets.}
    \label{fig:swiss-roll}
\end{wrapfigure}

\subsubsection{Direct matching}
\paragraph*{Experimental Setup} Inspired by the experimental setup from \citet{LeonLaw2018}, we regularly split space along height $B-1$ times as depicted in Figure~\ref{fig:swiss-roll} and group together manifold points within each height level, hence mixing together points with very different positions on the manifold. We obtain bags of samples $\{(^b\!\boldx_j, \boldt_j)\}_{j=1}^B$ where the $j$\textsuperscript{th} bag contains $n_j$ points $^b\!\boldx_j = \{x_j^{(i)}\}_{i=1}^{n_j}$ and their corresponding targets ${\boldt}_j = \{t_j^{(i)}\}_{i=1}^{n_j}$. We then construct bag aggregate targets by taking noisy bag targets average $z_j := \frac{1}{n_j}\sum_{i=1}^{n_j}t_j^{(i)} + \varepsilon_j$, where $\varepsilon_j\sim\cN(0, \sigma^2)$. We thus obtain matched weakly supervised bag learning dataset $\cD^\circ=\{(^b\!\boldx_j, z_j)\}_{j=1}^B$. Since each bag corresponds to a height-level, the center altitude of each height split $y_j\in\RR$ is a natural candidate bag-level covariate that informs on relative positions of the bags. We can augment the above dataset as $\cD = \{(^b\!\boldx_j, y_j, z_j)\}_{j=1}^B$. Using these bag datasets, we now wish to downscale aggregate targets $z_j$ to recover the unobserved manifold locations $\{\boldt_j\}_{j=1}^B$ and be able to query the target at any previously unseen input $x$.

\paragraph*{Models} We use a zero-mean prior on $f$ and choose a Gaussian kernel for $k$ and $\ell$. Inducing points location is initialized with K-means++ procedure for \textsc{varcmp} and \textsc{vbagg} such that they spread evenly across the manifold. For exact methods, kernel hyperparameters and noise variance $\sigma^2$ are learnt on $\cD$ by optimising the marginal likelihood. For \textsc{varcmp}, they are learnt jointly with variational distribution parameters by maximising an evidence lower bound objective. While CMP-based methods can leverage bag-level covariates $y_j$, baselines are restricted to learn from $\cD^\circ$. Adam optimiser~\cite{Adam} is used in all experiments.

\paragraph*{Results}We test models against unobserved groundtruth $\{\boldt_j\}_{j=1}^B$ by evaluating the root mean square error (RMSE) to the posterior mean. Table~\ref{table:swiss-roll-results} shows that \textsc{cmp}, \textsc{s-cmp} and \textsc{varcmp} outperform their corresponding counterparts i.e.\ \textsc{bagg-gp} and \textsc{vbagg}, with statistical significance confirmed by a Wilcoxon signed-rank test in Appendix~\ref{appendix:section:experiments}. Most notably, this shows that the additional knowledge on bag-level dependence instilled by $\ell$ is reflected even in a setting where each bag is matched with an aggregate target.

\begin{table}[t]
\centering
\caption{RMSE of the swissroll experiment for models trained over directly and indirectly matched datasets ; scores averaged over 20 seeds and 1 s.d is reported ; * indicates our proposed methods.}
\label{table:swiss-roll-results}
\begin{tabular}{ccccccc}\toprule
Matching & \textsc{cmp}* & \textsc{s-cmp}* & \textsc{varcmp}* & \textsc{bagg-gp} & \textsc{vbagg} & \textsc{gpr} \\ \midrule
Direct  & 0.33\tiny{$\pm$0.06}   & 0.25 \tiny{$\pm$0.04}     & {\bf 0.18}\tiny{$\pm$0.04}      & 0.60\tiny{$\pm$0.01}      & 0.22\tiny{$\pm$0.04}     & 0.70\tiny{$\pm$0.05}   \\ 
Indirect & {\bf 0.80}\tiny{$\pm$0.14} & 1.05\tiny{$\pm$0.04} & 0.87\tiny{$\pm$0.07} & 1.13\tiny{$\pm$0.11} & 1.46\tiny{$\pm$0.34} & 1.04\tiny{$\pm$0.05}\\
\bottomrule
\end{tabular}
\vspace{-0.2cm}
\end{table}

\subsubsection{Indirect matching}
\paragraph*{Experimental Setup}We now impose indirect matching through mediating variable $y_j$. We randomly select $N = \lfloor\frac{B}{2}\rfloor$ bags which we consider to be the $N$ first ones without loss of generality and split $\cD$ into $\cD_1 = \{(^b\!\boldx_j, y_j)\}_{j=1}^{N}$ and $\cD_2 = \{(\tilde y_j, \tilde z_j)\}_{j=1}^{B - N} = \{(y_{N + j}, z_{N + j})\}_{j=1}^{B-N}$, such that no pair of covariates bag $^b\!\boldx_j$ and aggregate target $\tilde z_j$ are jointly observed.

\paragraph*{Models}CMP-based methods are naturally able to learn from this setting and are trained by independently drawing samples from $\cD_1$ and $\cD_2$. Baseline methods however require bags of covariates to be matched with an aggregate bag target. To remedy this, we place a separate prior $g\sim\cG\cP(0, \ell)$ and fit regression model $\tilde z_j = g(\tilde y_j) + \varepsilon_j$ over $\cD_2$. We then use the predictive posterior mean to augment the first dataset as $\cD_1' = \left\{\left(^b\!\boldx_j, \EE[g(y_j)|\cD_2]\right)\right\}_{j=1}^{N}$. This dataset can then be used to train \textsc{bagg-gp}, \textsc{vbagg} and \textsc{gpr}.

\paragraph*{Results}For comparison, we use the same evaluation as in the direct matching experiment. Table~\ref{table:swiss-roll-results} underlines an anticipated drop in RMSE for all models, but we observe that \textsc{bagg-gp} and \textsc{vbagg} suffer most from the mediated matching of the dataset while \textsc{cmp} and \textsc{varcmp} report best scores by a substantial margin. This highlights how using a separate prior on $g$ to mediate $\cD_1$ and $\cD_2$ turns out to be suboptimal in contrast to using the prior naturally implied by CMP. While it is more computationally efficient than \textsc{cmp}, we observe a relative drop in performance for \textsc{s-cmp}.

\subsection{Mediated downscaling of atmospheric temperature}
Given the large diversity of sources and formats of remote sensing and model data, expecting directly matched observations is often unrealistic~\cite{cis}. For example, two distinct satellite products will often provide low and high resolution imagery that can be matched neither spatially nor temporally~\cite{TheMODISAerosolAlgorithmProductsandValidation, modis_cloud, Stephens, 700993}. Climate simulations~\cite{flato2011esm, scholze2012esm, eyring2016overview} on the other hand provide a comprehensive coarse resolution coverage of meteorological variables that can serve as a mediating dataset.

For the purpose of demonstration, we create an experimental setup inspired by this problem using Coupled Model Intercomparison Project Phase 6 (CMIP6)~\cite{eyring2016overview} simulation data. This grants us access to groundtruth high-resolution covariates to facilitate model evaluation.

\paragraph*{Experimental Setup}We collect monthly mean 2D atmospheric fields simulation from CMIP6 data~\cite{cmip6_rest, cmip6_ttop} for the following variables: air temperature at cloud top (T), mean cloud top pressure (P), total cloud cover (TCC) and cloud albedo ($\alpha$). First, we collocate TCC and $\alpha$ onto a HR latitude-longitude grid of size 360$\times$720 to obtain fine grained fields (latitude\textsuperscript{HR}, longitude\textsuperscript{HR}, altitude\textsuperscript{HR}, TCC\textsuperscript{HR}, $\alpha$\textsuperscript{HR}) augmented with a static HR surface altitude field. Then we collocate P and T onto a LR grid of size 21$\times$42 to obtain coarse grained fields (latitude\textsuperscript{LR}, longitude\textsuperscript{LR}, P\textsuperscript{LR}, T\textsuperscript{LR}). We denote by $B$ the number of low resolution pixels.

Our objective is to disaggregate T\textsuperscript{LR} to the HR fields granularity.
We assimilate the $j$\textsuperscript{th} coarse temperature pixel to an aggregate target $z_j:= $ T$_j^{\text{LR}}$ corresponding to bag $j$. Each bag includes HR covariates $^b\!\boldx_j = \{x_j^{(i)}\}_{i=1}^{n_j} := \{($latitude$_{j}^{\text{HR}(i)}$, longitude$_{j}^{\text{HR}(i)}$, altitude$_{j}^{\text{HR}(i)}$, TCC$_{j}^{\text{HR}(i)}$, $\alpha_{j}^{\text{HR}(i)})\}_{i=1}^{n_j}$. To emulate unmatched observations, we randomly select $N = \lfloor\frac{B}{2}\rfloor$ of the bags $\{^b\!\boldx_j\}_{j=1}^{N}$ and keep the opposite half of LR observations $\{z_{N + j}\}_{j=1}^{B-N}$, such that there is no single aggregate bag target that corresponds to one of the available bags. Finally, we choose the pressure field P\textsuperscript{LR} as the mediating variable. We hence compose a third party low resolution field of bag-level covariates $y_j :=($latitude$_j^\text{LR}$, longitude$_j^\text{LR}$, P$_j^\text{LR})$ which can separately be matched with both above sets to obtain datasets $\cD_1 = \{(\boldx_j, y_j)\}_{j=1}^{N}$ and $\cD_2 = \{(\tilde y_{j}, \tilde z_{j})\}_{j=1}^{B - N} = \{(y_{N + j}, z_{N + j})\}_{j=1}^{B - N}$.

\begin{figure}[t]
    \centering
    \caption{{\bf Left:} High-resolution atmoshperic covariates used for prediction; {\bf Center-Left:} Observed low-resolution temperature field, grey pixels are unobserved; {\bf Center} Unobserved high-resolution groundtruth temperature field; {\bf Center-Right:} \textsc{varcmp} deconditional posterior mean; {\bf Right} 95\% confidence region size on prediction; temperature values are in Kelvin.}
    \label{fig:downscaling-posterior}
    \includegraphics[width=\linewidth]{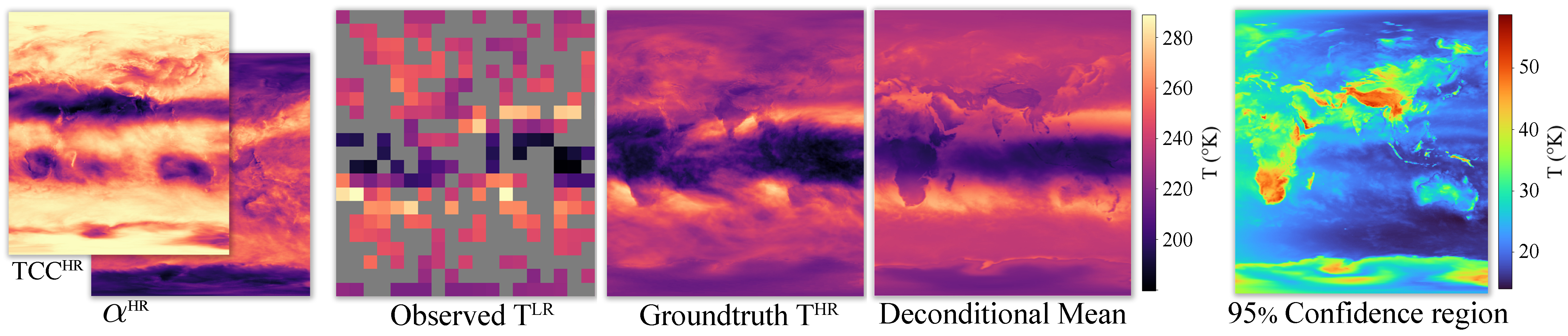}
\end{figure}

\paragraph*{Models Setup}We only consider variational methods to scale to large number of pixels. \textsc{varcmp} is naturally able to learn from indirectly matched data. We use a Mat\'{e}rn-1.5 kernel for rough spatial covariates (latitude, longitude) and a Gaussian kernel for atmospheric covariates (P, TCC, $\alpha$) and surface altitude. $k$ and $\ell$ are both taken as sums of Mat\'{e}rn and Gaussian kernels, and their hyperparameters are learnt along with noise variance during training. A high-resolution noise term is also introduced, with details provided in Appendix~\ref{appendix:section:experiments}. Inducing points locations are uniformly initialized across the HR grid. We replace \textsc{gpr} with an inducing point variational counterpart \textsc{vargpr}~\cite{Titsias2009VariationalLO}. Since baseline methods require a matched dataset, we proceed as with the unmatched swiss roll experiment and fit a GP regression model $g$ with kernel $\ell$ on $\cD_2$ and then use its predictive posterior mean to obtain pseudo-targets for the bags of HR covariates from $\cD_1$.

\paragraph*{Results}Performance is evaluated by comparing downscaling deconditional posterior mean against original high resolution field T\textsuperscript{HR} available in CMIP6 data~\cite{cmip6_ttop}, which we emphasise is never observed. We use random Fourier features~\cite{rahimi2007random} approximation of kernel $k$ to scale kernel evaluation to the HR covariates grid during testing. As reported in Table~\ref{table:downscaling-scores}, \textsc{varcmp} substantially outperforms both baselines with statistical significance provided in Appendix~\ref{appendix:section:experiments}. Figure~\ref{fig:downscaling-posterior} shows the reconstructed image with \textsc{varcmp}, plots for other methods are included in the Appendix~\ref{appendix:section:experiments}. The model resolves statistical patterns from HR covariates into coarse resolution temperature pixels, henceforth reconstructing a faithful HR version of the temperature field.

\begin{table}[t]
\centering
\caption{Downscaling similarity scores of posterior mean against groundtruth high resolution cloud top temperature field ; averaged over 10 seeds; we report 1 s.d. ; \enquote{\small{$\downarrow$}}: lower is better ; \enquote{\small{$\uparrow$}}: higher is better.}
\begin{tabular}{rcccc}\toprule
Model &  RMSE\small{$\;\downarrow$}  & MAE\small{$\;\downarrow$}  & Corr.\small{$\;\uparrow$} & SSIM\small{$\;\uparrow$}  \\\midrule
\textsc{vargpr} & 8.02\scriptsize{$\pm$0.28}  & 5.55\scriptsize{$\pm$0.17} & 0.831\scriptsize{$\pm$0.012} & {\bf 0.212}\scriptsize{$\pm$0.011} \\
\textsc{vbagg}    & 8.25\scriptsize{$\pm$0.15} & 5.82\scriptsize{$\pm$0.11} & 0.821\scriptsize{$\pm$0.006} & 0.182\scriptsize{$\pm$0.004} \\
\textsc{varcmp} & {\bf 7.40}\scriptsize{$\pm$0.25}  & {\bf 5.34}\scriptsize{$\pm$0.22} & {\bf 0.848}\scriptsize{$\pm$0.011} & {\bf 0.212}\scriptsize{$\pm$0.013} \\\bottomrule
\end{tabular}
\label{table:downscaling-scores}
\vspace{-0.2cm}
\end{table}

\section{Discussion}
We introduced a scalable Bayesian solution to the mediated statistical downscaling problem, which handles unmatched multi-resolution data. The proposed approach combines Gaussian Processes with the framework of deconditioning using RKHSs and recovers previous approaches as its special cases. We provided convergence rates for the associated deconditioning operator. Finally, we demonstrated the advantages over spatial disaggregation baselines in synthetic data and in a challenging atmospheric fields downscaling problem.

In future work, exploring theoretical guarantees of the computationally efficient shrinkage formulation in a multi-resolution setting and relaxing finite dimensionality assumptions for the convergence rate will have fruitful practical and theoretical implications. Further directions also open up to quantify uncertainty over the deconditional posterior since it is computed using empirical estimates of the CMP covariance. This may be problematic if the mediating variable undergoes covariate shift between the two datasets.
\section*{Acknowledgments}
SLC is supported by the EPSRC and MRC through the OxWaSP CDT programme EP/L016710/1. SB is supported by the EU's Horizon 2020 research and innovation programme under Marie Skłodowska-Curie grant agreement No 860100. DS is supported in partly by Tencent AI lab and partly by the Alan Turing Institute (EP/N510129/1).



\bibliographystyle{unsrtnat}
\bibliography{references}

\newpage
\appendix
\tableofcontents
\addtocontents{toc}{\protect\setcounter{tocdepth}{3}} 
\newpage
\section{Proofs}

\subsection{Proofs of Section~\ref{section:deconditioning-with-gps}}

\begin{proposition}\label{proposition:from-expectation-to-conditional}
    Let $h : \cX\rightarrow\RR$ such that $\EE[|h(X)|]<\infty$. Then, $\{y\in\cY \mid \EE[|h(X)| | Y=y] < \infty\}$ is a full measure set with respect to $\PP_Y$.
\end{proposition}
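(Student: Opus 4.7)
This is a standard consequence of the tower property together with the fact that integrable nonnegative functions are finite almost everywhere. The plan is to regard $y \mapsto \EE[|h(X)| \mid Y = y]$ as a nonnegative measurable function on $\cY$, show it is $\PP_Y$-integrable, and then conclude it must be $\PP_Y$-a.e.\ finite.

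More precisely, first I would let $\varphi$ denote a fixed version of the conditional expectation $\EE[|h(X)| \mid Y = \cdot]$, which exists and is nonnegative and $\cY$-measurable because $|h(X)| \geq 0$ is a.s.\ well-defined and the hypothesis $\EE[|h(X)|] < \infty$ ensures integrability. By the tower property (law of total expectation),
\begin{equation*}
    \int_\cY \varphi(y) \, \d\PP_Y(y) \;=\; \EE\bigl[\EE[|h(X)| \mid Y]\bigr] \;=\; \EE[|h(X)|] \;<\; \infty.
\end{equation*}
Hence $\varphi \in L^1(\PP_Y)$, and in particular the set $E := \{y \in \cY \mid \varphi(y) = +\infty\}$ satisfies $\PP_Y(E) = 0$, since otherwise $\int_\cY \varphi \, \d\PP_Y$ would be infinite. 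Its complement $\{y \in \cY \mid \EE[|h(X)| \mid Y = y] < \infty\} = \cY \setminus E$ therefore carries full $\PP_Y$-measure, which is exactly the claim.

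The only subtlety, rather than an obstacle, is that the conditional expectation is only defined up to $\PP_Y$-null sets; this is harmless because the statement is itself an almost-sure statement, so the conclusion is independent of the chosen version of $\varphi$. The argument requires no assumption on $\cX, \cY$ beyond the measurability framework already fixed at the start of Section~\ref{section:deconditioning-with-gps} (namely $\cX$ measurable and $\cY$ Borel), which is what guarantees existence of a regular version of the conditional law should one wish to write the conditional expectation explicitly as an integral against $\PP_{X \mid Y=y}$.
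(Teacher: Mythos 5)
Your proposal is correct and follows essentially the same route as the paper's proof: invoke the tower property to show $y \mapsto \EE[|h(X)| \mid Y=y]$ is $\PP_Y$-integrable, then conclude it is finite $\PP_Y$-a.e. The paper additionally cites the existence of a regular conditional distribution (Kallenberg, Theorem 6.3), which you also correctly note as the underlying measurability justification.
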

\begin{proof}
Since $\cX$ is a Borel space and $\cY$ is measurable, the existence of a $\PP_Y$-a.e. regular conditional probability distribution is guaranteed by \cite[Theorem 6.3]{kallenberg2002foundations}.
Now suppose $\EE[|h(X)|] < \infty$ and let $\cY^o = \{y\in\cY\mid \EE[|h(X)||Y=y] < \infty\}$. Since $\EE[|h(X)|] = \EE\left[\EE[|h(X)|\mid Y]\right]$, the conditional expectation $\EE[|h(X)|\mid Y]$ must have finite expectation almost everywhere, i.e.\ $\PP_Y(\cY^o) = 1$. 
\end{proof}

\begin{customprop}{3.2}
    Suppose $\EE[|m(X)|] < \infty$ and $\EE[\|k_X\|_k] < \infty$ and let $(X', Y')\sim\PP_{XY}$. Then $g$ is a Gaussian process $g\sim\cG\cP(\nu, q)$ a.s.\ , specified by
\begin{equation}
    \nu(y) = \EE[m(X)|Y=y] \qquad\quad     q(y, y')  = \EE[k(X, X')|Y=y, Y'=y']\label{eq:cmp-covar-function}
\end{equation}
$\forall y, y'\in\cY$. Furthermore, $q(y, y') = \langle \mu_{X|Y=y}, \mu_{X|Y=y'}\rangle_k$ a.s.
\end{customprop}
\begin{proof}[Proof of Proposition~\ref{proposition:cmp-characterization}] 

We will assume for the sake of simplicity that $m = 0$ in the following derivations and will return to the case of an uncentered GP at the end of the proof.

\paragraph*{Show that $g(y)$ is in a space of Gaussian random variables}

Let $(\Omega, \cF, \PP)$ denote a probability space and $L^2(\Omega, \PP)$ the space of square integrable random variables endowed with standard inner product. $\forall x\in\cX$, since $f(x)$ is Gaussian, then $f(x)\in L^2(\Omega, \PP)$. We can hence define $\cS(f)$ as the closure in $L^2(\Omega, \PP)$ of the vector space spanned by $f$, i.e. $\cS(f) := \cSpan{f(x)\, : \, x\in\cX}$.

Elements of $\cS(f)$ write as limits of centered Gaussian random variables, hence when their covariance sequence converge, they are normally distributed. Let $T\in\cS(f)^\perp$, then we have $\EE[T f(x)] = 0$. Let $y\in\cY$, we also have
\begin{align}
    \EE[Tg(y)] = \EE\left[\int_\cX T f(x)\d\PP_{X|Y=y}\right]
\end{align}

In order to switch orders of integration, we need to show that the double integral satisfies absolute convergence.
\begin{align}
    \int_\cX \EE[|Tf(x)|]\d\PP_{X|Y=y}(x) & \leq \int_\cX \sqrt{\EE[T^2]\EE[f(x)^2]}\d\PP_{X|Y=y}(x) \\
    & = \sqrt{\EE[T^2]}\int_\cX \|k_x\|_k\d\PP_{X|Y=y}(x) \\
    & = \sqrt{\EE[T^2]}\EE[\|k_X\|_k|Y=y]
\end{align}

Since $T\in L^2(\Omega, \PP)$, $\EE[T^2] < \infty$. Plus, as we assume that $\EE[\|k_X\|_k] < \infty$, Proposition~\ref{proposition:from-expectation-to-conditional} gives that $\EE[\|k_X\|_k|Y=y] < \infty$ a.s. We can thus apply Fubini's theorem and obtain
\begin{align}
    \EE[Tg(y)] = \int_\cX \EE[Tf(x)]\d\PP_{X|Y=y}(x) = 0 \enspace\text{a.s.}
\end{align}

As this holds for any $T\in\cS(f)^\perp$, we conclude that $g(y)\in\left(\cS(f)^\perp\right)^\perp \text{ a.s.}\Rightarrow g(y)\in\cS(f) \text{ a.s.}$. We cannot claim yet though that $g(y)$ is Gaussian since we do not know whether it results from a sequence of Gaussian variables with converging variance sequence. We now have to prove that $g(y)$ has a finite variance.

\paragraph*{Show that $g(y)$ has finite variance}

We proceed by computing the expression of the covariance between $g(y)$ and $g(y')$ which is more general and yields the variance.

Let $y, y'\in\cY$, the covariance of $g(y)$ and $g(y')$ is given by
\begin{align}
    q(y, y') & = \EE[g(y)g(y')] - \EE[g(y)]\EE[g(y')] \\
    & = \EE\left[\int_\cX\int_\cX f(x)f(x')\d\PP_{X|Y=y}(x)\d\PP_{X|Y=y'}(x')\right] \label{eq:proof-covar-terms}\\
    & - \EE\left[\int_\cX f(x)\d\PP_{X|Y=y}(x)\right]\EE\left[\int_\cX f(x')\d\PP_{X|Y=y'}(x')\right] \label{eq:proof-mean-terms}
\end{align}

Choosing $T$ as a constant random variable in the above, we can show that $\int_\cX \EE[|f(x)|]\d\PP_{X|Y=y}(x) < \infty$ a.s. We can hence apply Fubini's theorem to switch integration order in the mean terms (\ref{eq:proof-mean-terms}) and obtain that $\EE[g(y)] = 0$ since $f$ is centered.

To apply Fubini's theorem to (\ref{eq:proof-covar-terms}), we need to show that the triple integration absolutely converges. Let $x, x'\in\cX$, we know that $\EE[|f(x)f(x')|] \leq \sqrt{\EE[f(x)^2]\EE[f(x')^2]} = \|k_x\|_k\|k_{x'}\|_k$. Using similar arguments as above, we obtain
\begin{align}
\int_\cX\int_\cX \EE[|f(x)f(x')|]\d\PP_{X|Y=y}(x)\d\PP_{X|Y=y'}(x') \leq \EE[\|k_X\|_k|Y=y]\EE[\|k_X\|_k |Y=y'] < \infty\text{ a.s.}
\end{align}

We can thus apply Fubini's theorem which yields
\begin{align}
    q(y, y') & = \int_\cX\int_\cX \EE[f(x)f(x')]\d\PP_{X|Y=y}(x)\d\PP_{X|Y=y'}(x') \\
    & = \int_\cX\int_\cX \underbrace{\operatorname{Cov}(f(x), f(x'))}_{k(x, x')}\d\PP_{X|Y=y}(x)\d\PP_{X|Y=y'}(x') \\
    & = \EE[k(X, X')| Y=y, Y'=y'] \\
    & \leq \EE[\|k_X\|_k|Y=y]\EE[\|k_X\|_k|Y=y'] < \infty\enspace \text{a.s.}
\end{align}
where $(X', Y')$ denote random variables with same joint distribution than $(X, Y)$ as defined in the proposition.

$g(y)\in\cS(f)$ and has finite variance $q(y, y)$ a.s., it is thus a centered Gaussian random variable a.s. Furthermore, as this holds for any $y\in\cY$, then any finite subset of $\{g(y)\,:\, y\in\cY\}$ follows a multivariate normal distribution which shows that $g$ is a centered Gaussian process on $\cY$ and its covariance function is specified by $q$.

\paragraph*{Uncentered case $m\neq 0$}

We now return to an uncentered GP prior on $f$ with assumption that $\EE[|m(X)|] < \infty$. By Proposition~\ref{proposition:from-expectation-to-conditional}, we get that $\EE[|m(X)|\,|Y=y] < \infty$ a.s. for $y\in\cY$.

Let $\nu : y\mapsto \EE[m(X)|Y=y]$. We can clearly rewrite $g$ as the sum of $\nu$ and a centered GP on $\cY$
\begin{equation}
    g(y) = \nu(y) + \int_\cX (f(x) - m(x))\d\PP_{X|Y=y}(x),\qquad\forall y\in\cY
\end{equation}
which is well-defined almost surely.

It hence comes $\EE[g(y)] = \EE[\nu(y)] + 0 = \nu(y)$. Plus since $\nu(y)$ is a constant shift, the covariance is not affected and has the same expression than for the centered GP. Since this holds for any $y\in\cY$, we conclude that $g\sim\cG\cP(\nu, q)$ a.s.

\paragraph*{Show that $q(y, y') = \langle\mu_{X|Y=y}, \mu_{X|Y=y'}\rangle_k$}

First, we know by Proposition~\ref{proposition:from-expectation-to-conditional} that $\EE[\|k_X\|_k|Y=y] < \infty$ $\PP_Y$-a.e.\ .By triangular inequality, we obtain $\|\mu_{X|Y=y}\|_k = \|\EE[k_X|Y=y]\|_k \leq \EE[\|k_X\|_k|Y=y] < \infty$ $\PP_Y$-a.e. and hence $\mu_{X|Y=y}$ is well-defined up to a set of measure zero with respect to $\PP_Y$.

With notations from Proposition~\ref{proposition:cmp-characterization}, we can proceed for any $y, y'\in\cY$ as
\begin{align}
    q(y, y') & = \EE[k(X, X')|Y=y', Y'=y'] \\
             & = \int_\cX\int_\cX k(x, x') \d\PP_{X|Y=y}(x)\d\PP_{X|Y=y'}(x') \\
             & = \int_\cX\int_\cX \langle k_x, k_{x'}\rangle_k \d\PP_{X|Y=y}(x)\d\PP_{X|Y=y'}(x') \\
             & = \left\langle \int_\cX k_x\d\PP_{X|Y=y}(x), \int_\cX k_{x'}\d\PP_{X|Y=y'}(x') \right\rangle_k \enspace\text{ a.s.}\\
             & = \langle \mu_{X|Y=y}, \mu_{X|Y=y'}\rangle_k \enspace\text{ a.s.}
\end{align}
\end{proof}

\begin{customprop}{3.3}
    Given aggregate observations $\tilde\bfz$ with homoscedastic noise $\sigma^2$, the deconditional posterior of $f$ is defined as the Gaussian process $f|\tilde\bfz\sim\cG\cP(m_\rmd, k_\rmd)$ where
    \vspace*{-0.1cm}
    \begin{align}
        m_\rmd(x) & = m(x) + k_x^\top C_{X|Y}\bfPsi_{\tilde\bfy} (\bfQ_{\tilde\bfy\tilde\bfy} + \sigma^2{\bf I}_M)^{-1}(\tilde\bfz - \nu(\tilde\bfy)),\\
        k_\rmd(x, x') & = k(x, x') - k_x^\top C_{X|Y}\bfPsi_{\tilde\bfy} (\bfQ_{\tilde\bfy\tilde\bfy} + \sigma^2{\bf I}_M)^{-1}\bfPsi_{\tilde\bfy}^\top C_{X|Y}^\top k_{x'}.
    \end{align}
\end{customprop}
\begin{proof}[Proof of Proposition~\ref{proposition:deconditional-posterior}]
Recall that
\begin{equation}
    \begin{bmatrix}
    f(\bfx) \\
    \tilde\bfz
    \end{bmatrix}\mid {\bfy, \tilde\bfy}\sim\cN\left(\begin{bmatrix}
    m(\bfx)\\
    \nu(\tilde\bfy)
    \end{bmatrix}, \begin{bmatrix}
     \bfK_{\bfx\bfx} & \boldupsilon \\
    \boldupsilon^\top & \bfQ_{\tilde\bfy\tilde\bfy} + \sigma^2{\bf I}_M
    \end{bmatrix}\right).
\end{equation}
where $\boldupsilon = \operatorname{Cov}(f(\bfx), \tilde\bfz) = \bfPhi_\bfx^\top C_{X|Y}\bfPsi_{\tilde\bfy}$.

Applying Gaussian conditioning, we obtain that
\begin{align}
    f(\bfx)\mid\tilde\bfz, \bfy, \tilde\bfy \sim \cN( & m(\bfx) + \boldupsilon(\bfQ_{\tilde\bfy\tilde\bfy} + \sigma^2{\bf I}_M)^{-1}(\tilde\bfz - \nu(\tilde\bfy)), \\
    & \bfK_{\bfx\bfx} - \boldupsilon(\bfQ_{\tilde\bfy\tilde\bfy} + \sigma^2{\bf I}_M)^{-1}\boldupsilon^\top)
\end{align}

Since the latter holds for any input $\bfx\in\cX^N$, by Kolmogorov extension theorem this implies that $f$ conditioned on the data $\tilde\bfz, \tilde\bfy$ is a draw from a GP. We denote it $f|\tilde\bfz\sim\cG\cP(m_\rmd, k_\rmd)$ and it is specified by
\begin{align}
    m_\rmd(x) & = m(x) + k_x^\top C_{X|Y}\bfPsi_{\tilde\bfy} (\bfQ_{\tilde\bfy\tilde\bfy} + \sigma^2{\bf I}_M)^{-1}(\tilde\bfz - \nu(\tilde\bfy)),\\
    k_\rmd(x, x') & = k(x, x') - k_x^\top C_{X|Y}\bfPsi_{\tilde\bfy} (\bfQ_{\tilde\bfy\tilde\bfy} + \sigma^2{\bf I}_M)^{-1}\bfPsi_{\tilde\bfy}^\top C_{X|Y}^\top k_{x'}.
\end{align}

Note that we abuse notation
\begin{align}
    "k_x^\top C_{X|Y}\bfPsi_{\tilde\bfy}" & = \begin{bmatrix}\langle k_x, C_{X|Y}\ell_{\tilde y_{1}}\rangle_k & \hdots & \langle k_x, C_{X|Y}\ell_{\tilde y_{M}}\rangle_k\end{bmatrix} \\
    & =  \begin{bmatrix}\langle k_x, \mu_{X|Y=\tilde y_{1}}\rangle_k & \hdots & \langle k_x, \mu_{X|Y=\tilde y_{M}}\rangle_k\end{bmatrix} \\
    & = \begin{bmatrix}\operatorname{Cov}(f(x), g(\tilde y_1)) & \hdots & \operatorname{Cov}(f(x), g(\tilde y_M))\end{bmatrix} .
\end{align}
\end{proof}


\subsection{Proofs of Section~\ref{section:deconditioning-as-regression}}

\begin{customprop}{4.1}
[Empirical DMO as vector-valued regressor]
    The minimiser of the empirical reconstruction risk is the empirical DMO, i.e. $\hat{D}_{X|Y} = \arg\min_{D\in\cH_{\Gamma}} {\hat\cE_\rmd(D)}$    
\end{customprop}
\begin{proof}[Proof of Proposition~\ref{proposition:dmo-as-regression}]

Let $D\in\cH_\Gamma$, we recall the form of the regularised empirical objective
\begin{equation}
    \hat\cE_\rmd(D) = \frac{1}{M}\sum_{j=1}^M\|\ell_{\tilde y_j} - D\hat{C}_{X|Y}\ell_{\tilde y_j}\|^2_{\ell} + \epsilon\|D\|^2_\Gamma
\end{equation}

By \cite[Theorem 4.1]{micchelli2005learning}, if $\hat D \in \underset{D\in\cH_\Gamma}{\arg\min}\,\hat\cE_\rmd(D)$, then it is unique and has form
\begin{equation}\label{eq:solution-template}
    \hat D = \sum_{j=1}^M\Gamma_{\hat{C}_{X|Y}\ell_{\tilde y_j}}c_j
\end{equation}
where $\Gamma_{\hat{C}_{X|Y}\ell_{\tilde y_j}}: \cH_{\ell} \rightarrow \cH_{\Gamma}$ is the vector-valued kernel $\Gamma$'s feature map indexed by $\hat{C}_{X|Y}\ell_{\tilde y_j}$, such that for any $h \in \cH_{\Gamma}$ and $g \in \cH_{\ell}$, we have $\langle h, \Gamma_{\hat{C}_{X|Y}\ell_{\tilde y_j}} g \rangle_{\Gamma} = \langle h(\hat{C}_{X|Y}\ell_{\tilde y_j}), g \rangle_{\ell}$. (see \cite{paulsen2016introduction} for a detailed review of vector-valued RKHS). Furthermore, coefficients $c_1, \ldots, c_M \in\cH_\ell$ are the unique solutions to
\begin{equation}
    \sum_{i=1}^M\left(\Gamma(\hat{C}_{X|Y}\ell_{\tilde y_i}, \hat{C}_{X|Y}\ell_{\tilde y_j}) + M\epsilon\delta_{ij}\right)c_{i} = \ell_{\tilde y_j}
\end{equation}

Since 
\begin{equation}
    \Gamma(\hat{C}_{X|Y}\ell_{\tilde y_i}, \hat{C}_{X|Y}\ell_{\tilde y_j}) = \langle \hat{C}_{X|Y}\ell_{\tilde y_i}, \hat{C}_{X|Y}\ell_{\tilde y_j} \rangle_k \operatorname{Id}_{\cH_\ell} = \hat q(\tilde y_i, \tilde y_j) \operatorname{Id}_{\cH_\ell}
\end{equation}
where $\operatorname{Id}_{\cH_\ell}$ denotes the identity operator on $\cH_\ell$. The above simplifies as
\begin{align}\label{eq:coefficients-dmo-regression}
    & \sum_{i=1}^M\left(\hat q(\tilde y_i, \tilde y_j) + M\epsilon\delta_{ij}\right)c_{i} = \ell_{\tilde y_j} \quad\forall 1\leq j\leq M\\ & \Leftrightarrow  \left(\hat\bfQ_{\tilde\bfy\tilde\bfy} + M\epsilon{\bf I}_M\right){\bf c}^\top = \bfPsi_{\tilde\bfy}^\top \\ & \Leftrightarrow {\bf c} = \bfPsi_{\tilde\bfy}\left(\hat\bfQ_{\tilde\bfy\tilde\bfy} + M\epsilon{\bf I}_M\right)^{-1}
\end{align}
where ${\bf c} = \begin{bmatrix}c_1 & \hdots & c_M\end{bmatrix}$.

Since for any $f\in\cH_k$ and $g\in\cH_\ell$, our choice of kernel gives $\Gamma_f g = g\otimes f$, plugging (\ref{eq:coefficients-dmo-regression}) into (\ref{eq:solution-template}) we obtain
\begin{align}
    \hat D & = \sum_{j=1}^M \Gamma_{\hat C_{X|Y}\ell_{\tilde y_j}}c_j\\
    & = \sum_{j=1}^M c_j\otimes \hat C_{X|Y}\ell_{\tilde y_j} \\
    & = {\bf c} \left[\hat C_{X|Y}\bfPsi_{\tilde\bfy}\right]^\top \\
    & = \left[\bfPsi_{\tilde\bfy}\left(\hat\bfQ_{\tilde\bfy\tilde\bfy} + M\epsilon{\bf I}_M\right)^{-1}\right] \left[\hat C_{X|Y}\bfPsi_{\tilde\bfy}\right]^\top \\
    & = \bfPsi_{\tilde\bfy}\left(\hat\bfQ_{\tilde\bfy\tilde\bfy} + M\epsilon{\bf I}_M\right)^{-1}\bfPsi_{\tilde\bfy}^\top\hat C_{X|Y}^\top \\
    & = \bfPsi_{\tilde\bfy}\left(\hat\bfQ_{\tilde\bfy\tilde\bfy} + M\epsilon{\bf I}_M\right)^{-1}\bfPsi_{\tilde\bfy}^\top\bfPsi_{\bfy}\left(\bfL_{\bfy\bfy} + N\lambda {\bf I}_N\right)^{-1}\bfPhi_\bfx \\
    & = \bfPsi_{\tilde\bfy}\left(\hat\bfQ_{\tilde\bfy\tilde\bfy} + M\epsilon{\bf I}_M\right)^{-1}\bfA\bfPhi_\bfx \\
    & = \hat D_{X|Y}
\end{align}
which concludes the proof.
\end{proof}

\begin{customthm}{4.2}[Empirical DMO Convergence Rate]Denote $D_{\PP_Y} = \arg\min_{D\in\cH_\Gamma}\cE_\rmd(D)$. Assume assumptions stated in Appendix~\ref{appendix: details on convergence result} are satisfied. In particular, let $(b, c)$ and $(0, c')$ be the parameters of the restricted class of distribution for $\PP_{Y}$ and $\PP_{XY}$ respectively and let $\iota\in]0, 1]$ be the Hölder continuity exponent in $\cH_\Gamma$. Then, if we choose $\lambda = N^{-\frac{1}{c' + 1}}$, $N = M^{\frac{a(c' + 1)}{\iota(c' - 1)}}$ where $a > 0$, we have the following result,
\begin{itemize}
    \item If $a\leq\frac{b(c+1)}{bc+1}$, then $\cE_\rmd(\hat D_{X|Y}) - \cE_\rmd(D_{\PP_Y}) = \cO(M^\frac{-ac}{c+1})$  with $\epsilon=M^\frac{-a}{c+1}$
    \item If $a\geq\frac{b(c+1)}{bc+1}$, then $\cE_\rmd(\hat D_{X|Y}) - \cE_\rmd(D_{\PP_Y}) = \cO(M^\frac{-bc}{bc+1})$  with $\epsilon=M^\frac{-b}{bc+1}$
\end{itemize}
\end{customthm}
\begin{proof}[Proof of Theorem~\ref{theorem:convergence-rate}]
In Appendix~\ref{appendix: details on convergence result}, we present Theorem~\ref{theorem:detailed-convergence-result} which is a detailed version of this result with all assumptions explicitly stated. The proof of Theorem~\ref{theorem:detailed-convergence-result} constitutes the proof of this result.
\end{proof}

\newpage 
\section{Variational formulation of the deconditional posterior}\label{appendix:variational}

Inference computational complexity is $\cO(M^3)$ for the posterior mean and $\cO(N^3 + M^3)$ for the posterior covariance. To scale to large datasets, we introduce in the following a variational formulation as a scalable approximation to the deconditional posterior $f(\bfx)|\tilde\bfz$. Without loss of generality, we assume in the following that $f$ is centered, i.e.\ $m=0$.

\subsection{Variational formulation}
We consider a set of $d$ inducing locations $\bfw = \begin{bmatrix} w_1 & \ldots & w_d\end{bmatrix}^\top\in\cX^d$ and define inducing points as the gaussian vector ${\bf u} := f({\bf w})\sim\cN(0, {\bf K}_{\bf ww})$, where ${\bf K}_{\bf ww} := k({\bf w, w})$. We set $d$-dimensional variational distribution $q({\bf u}) = \cN({\bf u| \bfeta, \bfSigma})$ over inducing points and define $q({\bf f}) := \int p({\bf f} | {\bf u})q({\bf u})\d{\bf u}$ as an approximation of the deconditional posterior $p({\bf f} | {\bf z})$. The estimation of the deconditional posterior can thus be approximated by optimising the variational distribution parameters $\bfeta$, $\bfSigma$ to maximise the \emph{evidence lower bound} (ELBO) objective given by
\begin{equation}\label{eq:elbo}
    \operatorname{ELBO}(q) = \EE_{q({\bf f})}[\log p({\bf \tilde z| f})] + \operatorname{KL}(q({\bf u})\| p({\bf u})).
\end{equation}
As both $q$ and $p$ are Gaussians, the Kullback-Leibler divergence admits closed-form. The expected log likelihood term decomposes as
\begin{equation}\label{eq:elbo-ell}
    \EE_{q({\bf f})}[\log p({\bf z| f})] = -\frac{M}{2}\log(2\pi\sigma^2) + \frac{1}{2\sigma^2}\left(\tr\left({\bf A^\top {\bf \bar\bfSigma} A}\right)  + \left\|{\tilde\bfz - A^\top{\bf \bar{\bfeta}}}\right\|_{2}^2\right)
\end{equation}
where $\bar\bfeta$ and $\bar\bfSigma$ are the parameters of the posterior variational distribution $q({\bf f}) = \cN({\bf f}|\bar\bfeta, \bar\bfSigma)$ given by
\begin{equation}
    {\bf \bar\bfeta = K_{xw}K_{ww}^{-1}\bfeta} \qquad\quad {\bf \bar\bfSigma = K_{xx} - K_{xw}\left[K_{ww}^{-1} - K_{ww}^{-1}\bfSigma K_{ww}^{-1}\right]K_{wx}}
\end{equation}
Given this objective, we can optimise this lower bound with respect to variational parameters $\bfeta, \bfSigma$, noise $\sigma^2$ and parameters of kernels $k$ and $\ell$, with an option to parametrize these kernels using feature maps given by deep neural network~\cite{law2019hyperparameter}, using a stochastic gradient approach for example. We might also want to learn the inducing locations ${\bf w}$.

\subsection{Details on evidence lower bound derivation}
For completeness, we provide here the derivation of the evidence lower bound objective. Let us remind its expression as stated in (\ref{eq:elbo})
\begin{equation}
    \operatorname{ELBO}(q) = \EE_{q({\bf f})}[\log p({\bf\tilde z}|{\bf f})] - \operatorname{KL}(q({\bf u}) \| p({\bf u}))
\end{equation}

The second term here is the Kullback-Leibler divergence of two gaussian densities which has a known and tractable closed-form expression. 
\begin{equation}
    \operatorname{KL}(q({\bf u}) \| p({\bf u})) = \frac{1}{2}\left[\tr\left(\bfK_{\bf ww}^{-1}\bfSigma\right) + \bfeta^\top\bfK_{\bf ww}^{-1}\bfeta - d + \log\frac{\det\bfK_{\bf ww}}{\det\bfSigma} \right]
\end{equation}

The first term is the expected log likelihood and needs to be derived. Using properties of integrals of gaussian densities, we can start by showing that $q({\bf f})$ also corresponds to a gaussian density which comes
\begin{align}
    q({\bf f}) & = \int p({\bf f} | {\bf u})q({\bf u})\d{\bf u} \\
    & = \int \cN({\bf f}|{\bf K_{xw}K_{ww}^{-1}u}, {\bf K_{xx} - K_{xw}K_{ww}^{-1}K_{wx}})\times \cN({\bf u| \bfeta, \bfSigma})\d{\bf u} \\
    & = \cN({\bf f| \bar\bfeta, \bar\bfSigma})
\end{align}

where
\begin{align}
    {\bf \bar\bfeta} & = {\bf K_{xw}K_{ww}^{-1}\bfeta} \\
    {\bf \bar\bfSigma} & = {\bf K_{xx} - K_{xw}\left[K_{ww}^{-1} - K_{ww}^{-1}\bfSigma K_{ww}^{-1}\right]K_{wx}}
\end{align}

Let's try now to obtain a closed-form expression of $\EE_{q({\bf f})}[\log p({\bf\tilde z}|{\bf f})]$ on which we will be able to perform a gradient-based optimization routine. Using Gaussian conditioning on (\ref{eq:joint-gp-and-cmp}), we obtain
\begin{equation}\label{eq:63}
    p({\bf\tilde z| f}) = \cN({\bf\tilde z}| \boldsymbol{\Upsilon}^\top{\bf K}_{\bfx\bfx}^{-1}{\bf f}\;,\; {\bfQ_{\tilde\bfy\tilde\bfy}} + \sigma^2 {\bf I}_M - \boldsymbol{\Upsilon}^\top{\bf K_{\bfx\bfx}}^{-1}\boldsymbol{\Upsilon})
\end{equation}

We notice that $\boldsymbol{\Upsilon}^\top{\bf K}_{\bfx\bfx}^{-1} = \ell({\bf\tilde y,  y})({\bf L}_{\bfy\bfy} + \lambda N{\bf I}_N)^{-1}{\bf K_{\bfx\bfx} K_{\bfx\bfx}^{-1}} = \ell({\bf \tilde y,  y})({\bf L}_{\bfy\bfy} + \lambda N{\bf I}_N)^{-1} = {\bf A}$.

Hence we also have $\boldsymbol{\Upsilon}^\top{\bf K}_{\bfx\bfx}^{-1}\boldsymbol{\Upsilon} = {\bf A}^\top\bfK_{\bfx\bfx}{\bf A} = \bfQ_{\tilde\bfy\tilde\bfy}$.

We can thus simplify (\ref{eq:63}) as
\begin{equation}
    p({\bf\tilde z| f}) = \cN({\bf\tilde z}|  {\bf A^\top}{\bf f}, \sigma^2 {\bf I}_n)
\end{equation}

Then,
\begin{align}
    \log p({\bf\tilde z| f}) & = -\frac{M}{2}\log(2\pi\sigma^2)- \frac{1}{2\sigma^2}\left\|{\bf\tilde z -  A^\top f}\right\|^2_{2} \\
    \Rightarrow \EE_{q({\bf f})}[\log p({\bf\tilde z| f})] & = -\frac{M}{2}\log(2\pi\sigma^2) - \frac{1}{2\sigma^2}\EE_{q({\bf f})}\left[\left\|{\bf\tilde z -  A^\top f}\right\|^2_{2}\right]
\end{align}

Using the trace trick to express the expectation with respect to the posterior variational parameters ${\bar\bfeta, \bar\bfSigma}$, we have
\begin{align}
    \EE_{q({\bf f})}\left[\left\|{\bf\tilde z -  A^\top f}\right\|^2_{2}\right] & = \EE_{q({\bf f})}\left[\tr\left(\left({\bf\tilde z -  A^\top f}\right)^\top \left({\bf\tilde z -  A^\top f}\right)\right)\right] \\
    & = \EE_{q({\bf f})}\left[\tr\left(\left({\bf\tilde z -  A^\top f}\right)\left({\bf\tilde z -  A^\top f}\right)^\top\right)\right] \\
    & = \tr\left(\EE_{q({\bf f})}\left[\left({\bf\tilde z -  A^\top f}\right)\left({\bf\tilde z -  A^\top f}\right)^\top\right]\right) \\
\end{align}

And 
\begin{align}
    \EE_{q({\bf f})}\left[\left({\bf\tilde z -  A^\top f}\right)\left({\bf\tilde z -  A^\top f}\right)^\top\right] & = \operatorname{Cov}({\bf\tilde z -  A^\top f}) + \EE_{q({\bf f})}\left[{\bf\tilde z -  A^\top f}\right]\EE_{q({\bf f})}\left[{\bf\tilde z -  A^\top f}\right]^\top \\
    & = {\bf A^\top \bar{\bfSigma}A} + \left({\bf\tilde z - A^\top\bar{\bfeta}}\right)\left({\bf\tilde z - A^\top\bar{\bfeta}}\right)^\top
\end{align}

Hence, it comes that
\begin{align}
    \EE_{q({\bf f})}\left[\left\|{\bf\tilde z -  A^\top f}\right\|^2_{2}\right] & = \tr\left({\bf A^\top \bar{\bfSigma}A}\right) + \tr\left(\left({\bf\tilde z - A^\top\bar{\bfeta}}\right)\left({\bf\tilde z - A^\top\bar{\bfeta}}\right)^\top\right) \\
    & = \tr\left({\bf A^\top \bar{\bfSigma}A}\right)  + \left\|{\bf\tilde z - A^\top\bar{\bfeta}}\right\|_{2}^2
\end{align}
which can be efficiently computed as it only requires diagonal terms.

Wrapping up, we obtain that
\begin{equation}
    \operatorname{ELBO}(q) = -\frac{M}{2}\log(2\pi\sigma^2) - \frac{1}{2\sigma^2}\left(\tr\left({\bf A^\top \bar{\bfSigma}A}\right)  + \left\|{\bf\tilde z - A^\top\bar{\bfeta}}\right\|_{2}^2\right)- \operatorname{KL}(q({\bf u}) \| p({\bf u}))
\end{equation}

\newpage
\section{Details on Conditional Mean Shrinkage Operator}\label{appendix:shrinkage-estimator}

\subsection{Deconditional posterior with Conditional Mean Shrinkage Operator}

We recall from Proposition~\ref{proposition:deconditional-posterior} that the deconditional posterior is a GP specifed by mean and covariance functions
\begin{align}
    m_\rmd(x) & = m(x) + k_x^\top C_{X|Y}\bfPsi_{\tilde\bfy} (\bfQ_{\tilde\bfy\tilde\bfy} + \sigma^2{\bf I}_M)^{-1}(\tilde\bfz - \nu(\tilde\bfy)), \label{eq:appendix-deconditional-mean}\\
    k_\rmd(x, x') & = k(x, x') - k_x^\top C_{X|Y}\bfPsi_{\tilde\bfy}(\bfQ_{\tilde\bfy\tilde\bfy} + \sigma^2{\bf I}_M)^{-1}\bfPsi_{\tilde\bfy}^\top C_{X|Y}^\top k_{x'} \label{eq:appendix-deconditional-covar}
\end{align}
for any $x, x'\in\cX$, where we abuse notation for the cross-covariance term
\begin{equation}
   "k_x^\top C_{X|Y}\bfPsi_{\tilde\bfy}" = \begin{bmatrix} \langle k_x, C_{X|Y}\ell_{\tilde y_1} \rangle_k & \hdots & \langle k_x, C_{X|Y}\ell_{\tilde y_M} \rangle_k \end{bmatrix}.
\end{equation}

The CMO appears in the cross-covariance term $k_x^\top C_{X|Y}\bfPsi_{\tilde\bfy}$ and in the CMP covariance matrix $\bfQ_{\tilde\bfy\tilde\bfy} = \bfPsi_{\tilde\bfy}^\top C_{X|Y}^\top C_{X|Y}\bfPsi_{\tilde\bfy}$. To derive empirical versions using the Conditional Mean Shrinkage Operator we replace it by $^S\!\hat C_{X|Y} = {\bf\hat M}_{\bfy}(\bfL_{\bfy\bfy} + \lambda N{\bf I}_N)^{-1}\bfPsi_{\bfy}^\top$.

The empirical cross-covariance operator with shrinkage CMO estimate is given by
\begin{align}
    k_x^{\top} \,\!^S\!\hat C_{X|Y}\bfPsi_{\tilde\bfy} & = k_x^\top {\bf\hat M}_{\bfy}(\bfL_{\bfy\bfy} + \lambda N{\bf I}_N)^{-1}\bfPsi_{\bfy}^\top\bfPsi_{\tilde\bfy} \\
    & = k_x^\top {\bf\hat M}_{\bfy}(\bfL_{\bfy\bfy} + \lambda N{\bf I}_N)^{-1}\bfL_{\bfy\tilde\bfy} \\
    & = k_x^\top {\bf\hat M}_{\bfy}{\bf A}
\end{align}
where we abuse notation
\begin{align}
    "k_x^\top {\bf\hat M}_{\bfy}" & := \begin{bmatrix}\langle k_x, \hat\mu_{X|Y=y_1}\rangle_k & \hdots & \langle k_x, \hat\mu_{X|Y=y_N}\rangle_k\end{bmatrix} \\
    & = \begin{bmatrix} \frac{1}{n_1}\sum_{i=1}^{n_1}k(x_1^{(i)}, x) & \hdots & \frac{1}{n_N}\sum_{i=1}^{n_N}k(x_N^{(i)}, x)\end{bmatrix}.
\end{align}

The empirical shrinkage CMP covariance matrix is given by
\begin{align}
    ^S\!\hat\bfQ_{\tilde\bfy\tilde\bfy} & := \bfPsi_{\tilde\bfy}^\top \,\!^S\!\hat C_{X|Y}^\top  \,\!^S\!\hat C_{X|Y}\bfPsi_{\tilde\bfy} \\
    & = \bfPsi_{\tilde\bfy}^\top\bfPsi_{\bfy}(\bfL_{\bfy\bfy} + \lambda N{\bf I}_N)^{-1} {\bf\hat M}_{\bfy}^\top {\bf\hat M}_{\bfy}(\bfL_{\bfy\bfy} + \lambda N{\bf I}_N)^{-1}\bfPsi_{\bfy}^\top\bfPsi_{\tilde\bfy} \\
    & = {\bf A}^\top {\bf\hat M}_{\bfy}^\top {\bf\hat M}_{\bfy} {\bf A}
\end{align}
where with similar notation abuse
\begin{equation}
    "{\bf\hat M}_{\bfy}^\top {\bf\hat M}_{\bfy}" = \left[\langle \hat\mu_{X|Y=y_i}, \hat\mu_{X|Y=y_j}\rangle_k\right]_{1\leq i, j\leq N} = \left[\frac{1}{n_i n_j}\sum_{l=1}^{n_i}\sum_{r=1}^{n_j}k(x_i^{(l)}, x_j^{(r)})\right]_{1\leq i, j\leq N}
\end{equation}

Substituting the latters into (\ref{eq:appendix-deconditional-mean}) and (\ref{eq:appendix-deconditional-covar}), we obtain empirical estimates of the deconditional posterior with shrinkage CMO estimator defined as
\begin{align}
     \,\!^S\!\hat m_\rmd(x) := m(x) + k_x^\top {\bf\hat M}_{\bfy}{\bf A}({\bf A}^\top {\bf\hat M}_{\bfy}^\top {\bf\hat M}_{\bfy} {\bf A} + \sigma^2{\bf I}_M)^{-1}(\tilde\bfz - \hat\mu(\tilde\bfy)), \\
     \,\!^S\!\hat k_\rmd(x, x') := k(x, x') - k_x^\top {\bf\hat M}_{\bfy}{\bf A}({\bf A}^\top {\bf\hat M}_{\bfy}^\top {\bf\hat M}_{\bfy} {\bf A} + \sigma^2{\bf I}_M)^{-1}{\bf A}^\top {\bf\hat M}_{\bfy}^\top k_{x'}
\end{align}
for any $x, x'\in\cX$.

Note that as the number of bags increases, it is possible to derive a variational formulation similar to the one proposed in Section~\ref{appendix:variational} that leverages the shrinkage estimator to further speed up the overall computation.

\newpage
\subsection{Ablation Study}
In this section we will present an ablation study on the shrinkage CMO estimator. The key is to illustrate that the Shrinkage CMO performs on par with the standard CMO estimator but is much faster to compute. 

In the following, we will sample bag data of the form $^b\!\cD = \{^b\boldx_j, y_j\}_{j=1}^N$ and $^b\boldx_j = \{x_{j}^{(i)}\}_{i=1}^n$, i.e there are $N$ bags with $n$ elements inside each. We first sample $N$ bag labels $y_j \sim \cN(0, 2)$ and for each bag $y_j$, we sample $n$ observations $x_{j}^{(i)}|y_j \sim \cN(y_j \sin(y_j), 0.5^2)$. 

Recall in standard CME one would need to repeat the number of bag labels to match the cardinality of $x_{j}^{(i)}$, i.e estimating CME using data $\{x_{j}^{(i)}, y_j\}_{j=1, i=1}^{N, n}$.

Denote $\hat{C}_{X|Y}$ as the standard CMO estimator and $^S\hat{C}_{X|Y}$ as the shrinkage CMO estimator. We will compare the RMSE between the two estimator when tested on a grid of test points $\{x^*_i, y^*_i\}_{i=1}^{N^*}$, i.e comparing the RMSE of the values between $\hat{\mu}_{X|Y=y^*_i}(x^*_i):=\langle \hat{C}_{X|Y}\ell_{y^*_i}, k_{x^*_i}\rangle_{k} $ and $^S\hat{\mu}_{X|Y=y^*_i}(x^*_i) := \langle ^S\hat{C}_{X|Y}\ell_{y^*_i}, k_{x^*_i}\rangle_{k}$ for each $i$. We also report the time in seconds needed to compute the estimator. The following results are ran on a CPU. Kernel hyperparameters are chosen using the median heuristic. The regularisation for both estimator is set to $0.1$.

\begin{figure}[!htp]
    \centering
    \includegraphics[width=\textwidth]{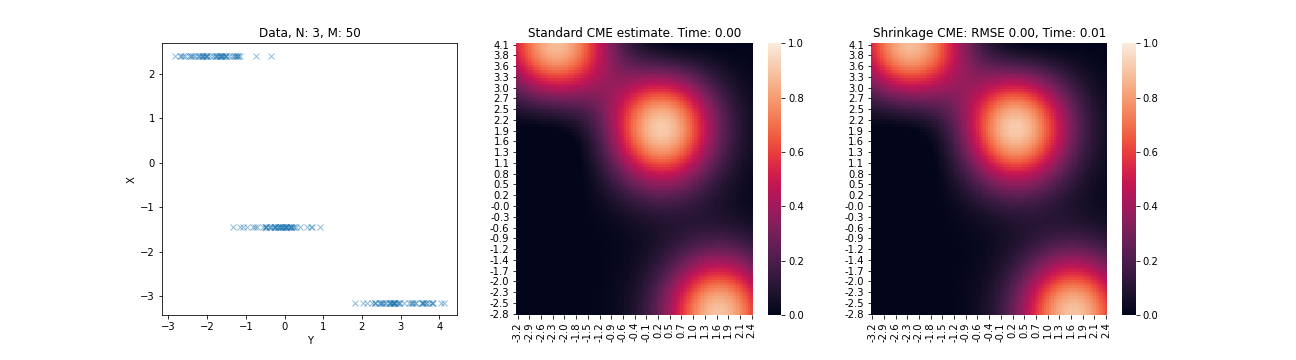}
    \caption{$3$ bags with $50$ samples each. (left) Data, (middle) $\hat{\mu}_{X|Y=y^*_i}(x^*_i)$ Standard CME. (right) $^S\hat{\mu}_{X|Y=y^*_i}(x^*_i)$ Shrinkage CME. We see both algorithms require very little time to train, ($\sim 0.01 $second) with a negligible difference in values as shown by the RMSE.}
    \label{fig:n3m50}
\end{figure}

\begin{figure}[!htp]
    \centering
    \includegraphics[width=\textwidth]{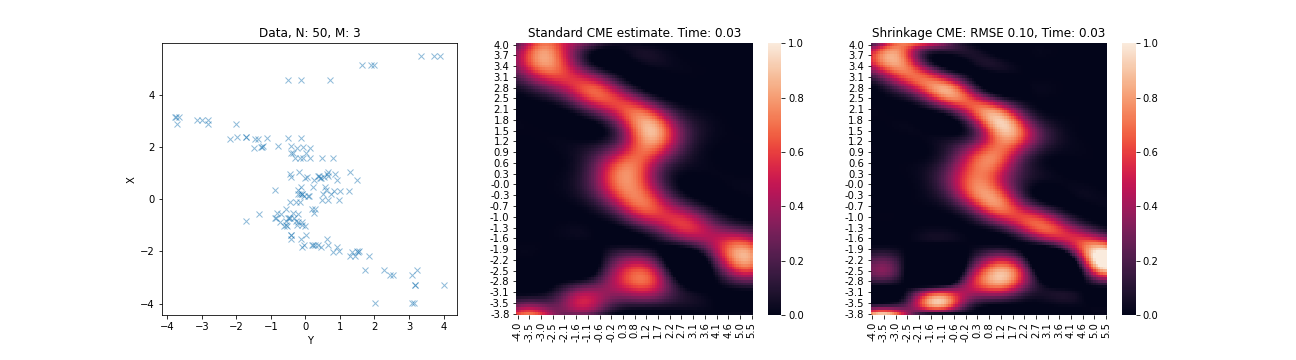}
    \caption{$50$ bags with $3$ samples each. (left) Data, (middle) $\hat{\mu}_{X|Y=y^*_i}(x^*_i)$ Standard CME. (right) $^S\hat{\mu}_{X|Y=y^*_i}(x^*_i)$ Shrinkage CME. Again, we see both algorithms require very little time to train, ($\sim 0.03 $ second). However, there is an increase in RMSE for the shrinkage estimator because there are much less samples for each bag, thus the empirical CME estimate $\hat{\mu}_{X|Y=y_j}$ might not be accurate. Nonetheless, it is still a small difference.}
    \label{fig:n50m3}
\end{figure}

Figures \ref{fig:n3m50} and \ref{fig:n50m3} show how shrinkage CMO performed compared to the standard CMO in a small data regime. Now when we increase the data size, we will start to see the major computational differences. (See Figures \ref{fig:n50m500} and \ref{fig:n500m50})
 
\begin{figure}[!htp]
    \centering
    \includegraphics[width=\textwidth]{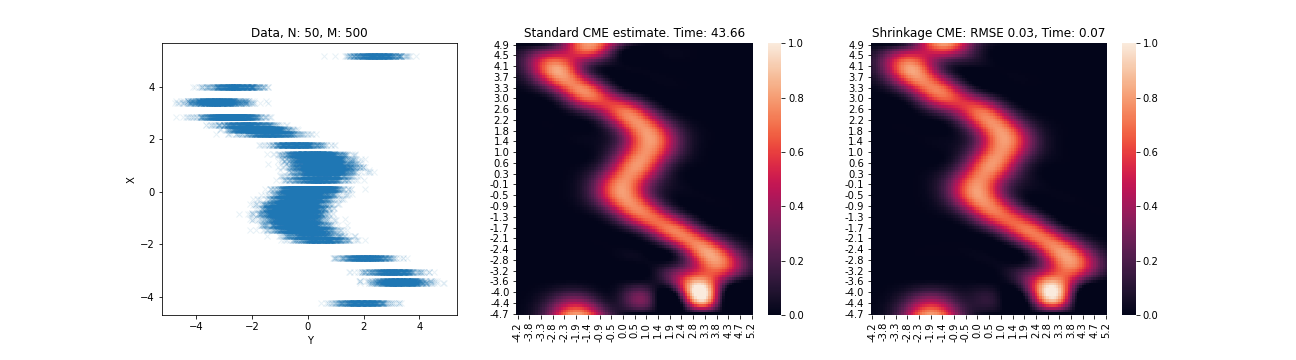}
    \caption{$50$ bags with $500$ samples each. (left) Data, (middle) $\hat{\mu}_{X|Y=y^*_i}(x^*_i)$ Standard CME. (right) $^S\hat{\mu}_{X|Y=y^*_i}(x^*_i)$ Shrinkage CME. With a small RMSE of $0.03$, the Shrinkage CME is approximately 600 times quicker than the standard version. }
    \label{fig:n50m500}
\end{figure}

\begin{figure}[!htp]
    \centering
    \includegraphics[width=\textwidth]{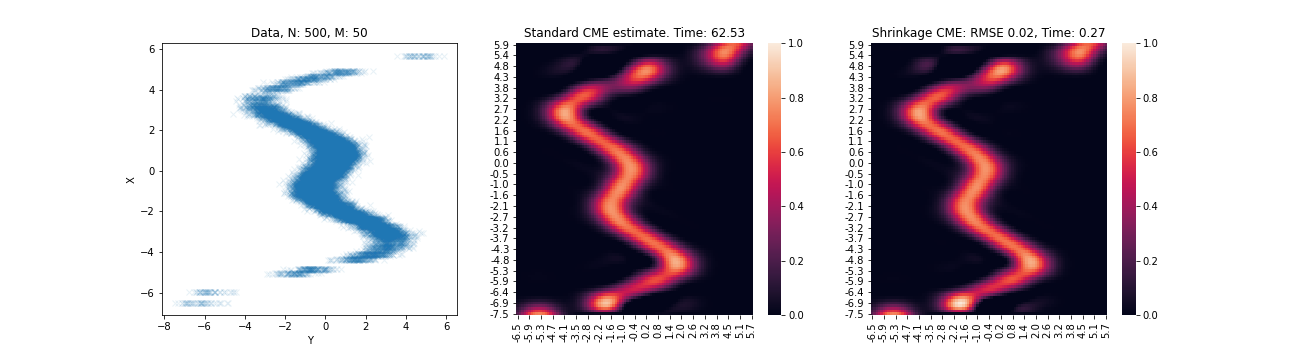}
    \caption{$500$ bags with $50$ samples each. (left) Data, (middle) $\hat{\mu}_{X|Y=y^*_i}(x^*_i)$ Standard CME. (right) $^S\hat{\mu}_{X|Y=y^*_i}(x^*_i)$ Shrinkage CME. Again, with a small RMSE of $0.02$, Shrinakge CME is  approximately 200 times quicker than the standard CME.}
    \label{fig:n500m50}
\end{figure}
\newpage

\strut
\newpage
\section{Details on Convergence Result}
\label{appendix: details on convergence result}

In this section, we provide insights about the convergence results stated in Section~\ref{section:deconditioning-as-regression}. These results are largely based on the impactful work of \citet{caponnetto2007optimal}, \citet{Szabo2016} and \citet{singh2019kernel} which we modify to fit our problem setup. Each assumption that we make is adapted from a similar assumption made in those works, for which we provide intuition and a detailed justification. We start by redefining the mathematical tools introduced in these works that are necessary to state our result.

\subsection{Definitions and $\cP_K(b, c)$ spaces}
We start by providing a general definition of covariance operators over vector-valued RKHS, which will allow us to specify a class of probability distributions for our convergence result.
\begin{definition}[Covariance operator]\label{definition:T}
    Let $\cW$ a Polish space endowed with measure $\rho$, $\cG$ a real separable Hilbert space and $K:\cW^2\rightarrow\cL(\cG)$ an operator-valued kernel spanning a $\cG$-valued RKHS $\cH_K$.
    
    The covariance operator of $K$ is defined as the positive trace class operator given by
    \begin{equation}
        T_K := \int_{\cZ} K_w K_w^* d\rho(w)\in\cL(\cH_K)
    \end{equation}
where $\cL(\cH_K)$ denotes the space of bounded linear operators over $\cH_k$.
\end{definition}

\begin{definition}[Power of self-adjoint Hilbert operator]
    Let $T$ a compact self-adjoint Hilbert space operator with spectral decomposition $T = \sum_{n=1}^\infty \lambda_n e_n\otimes e_n$ on $(e_n)_{n\in\NN}$ basis of $\operatorname{Ker}(T)^\perp$. The $r$\textsuperscript{th} power of $T$ is defined as $T^r =\sum_{n=1}^\infty \lambda_n^r e_n\otimes e_n$.
\end{definition}

Using the covariance operator, we now introduce  a general class of priors that does not assume parametric distributions, by adapting to our setup a definition originally introduced by \citet{caponnetto2007optimal}. This class captures the difficulty of a regression problem in terms of two simple parameters, $b$ and $c$~\cite{Szabo2016}.

\begin{definition}[$\cP_K(b, c)$ class]\label{definition:pbc-class}
    Let $\cE_\rho: \cG^\cZ\rightarrow[0, \infty[$ an expected risk function over $\rho$ and $E_\rho = \arg\min\,\cE_\rho$. Then given $b > 1$ and $c\in]1, 2]$, we say that $\rho$ is a $\cP_K(b, c)$ class probability measure w.r.t.\ $\cE_\rho$ if
    \begin{enumerate}
        \item \emph{Range assumption: } $\exists G\in\cH_K$ such that $E_\rho = T_K^{\frac{c-1}{2}}\circ G$ with $\|G\|_K^2\leq R$ for some $R\geq 0$
        \item \emph{Spectral assumption: } the eigenvalues $(\lambda_n)_{n\in\NN}$ of $T_K$ satisfy $\alpha\leq n^b\lambda_n\leq\beta\,,\forall n\in\NN$ for some $\beta\geq\alpha\geq 0$
    \end{enumerate}
\end{definition}

The range assumption controls the functional smoothness of $E_\rho$ as larger $c$ corresponds to increased smoothness. Specifically, elements of $\Range{T_K^{\frac{c - 1}{2}}}$ admit Fourier coefficients $(\gamma_n)_{n\in\NN}$ such that $\sum_{n=1}^\infty \gamma_n^2\lambda_n^{-(c+1)} < \infty$. In the limit $c\rightarrow 1$, we obtain $\Range{T_K^0} = \Range{\operatorname{Id}_{\cH_K}} = \cH_K$. Since ranked eigenvalues are positive and $\lambda_n\rightarrow 0$, greater power of the covariance operator $T_K$ give rise to faster decay of the Fourier coefficients and hence smoother operators.

The spectral assumptions can be read as a polynomial decay over the eigenvalues of $T_K$. Thus, larger $b$ leads to enhanced decay $\lambda_n = \Theta(n^{-b})$ and concretely in a smaller effective input dimension.

\subsection{Complete statement of the convergence result}

The following result corresponds to a detailed version of Theorem~\ref{theorem:convergence-rate} where all the assumptions are explicitly stated. As such, its proof also constitutes the proof for Theorem~\ref{theorem:convergence-rate}.
\begin{theorem}[Empirical DMO Convergence Rate]\label{theorem:detailed-convergence-result}
    Assume that
    \begin{enumerate}
        \item $\cX$ and $\cY$ are Polish spaces, i.e.\ separable and completely metrizable topoligical spaces
        \item $k$ and $\ell$ are continuous, bounded, their canonical feature maps $k_x$ and $\ell_y$ are measurable and $k$ is characteristic
        \item $\cH_\ell$ is finite dimensional
        \item $\arg\min\cE_\rmc \in\cH_\Gamma$ and $\arg\min\cE_\rmd\in\cH_\Gamma$
        \item The operator family $\{\Gamma_{\mu_{X|Y=y}}\}_{y\in\cY}$ is H\"{o}lder continuous with exponent $\iota \in]0, 1]$
        \item $\PP_{XY}$ is a $\cP_\Gamma(0, c')$ class probability measure w.r.t.\ $\cE_\rmc$ and $\PP_{Y}$ is a $\cP_\Gamma(b, c)$ class probability measure w.r.t.\ $\cE_\rmd$
        \item $\forall g\in\cH_\ell$, $\|g\|_\ell < \infty$ almost surely
    \end{enumerate}
    
    Let $D_{\PP_Y} = \underset{D\in\cH_\Gamma}{\arg\min}\,\cE_\rmd(D)$. Then, if we choose $\lambda = N^{-\frac{1}{c' + 1}}$ and $N = M^{\frac{a(c' + 1)}{\iota(c' - 1)}}$ where $a > 0$, we have
    \begin{itemize}
        \item If $a\leq\frac{b(c+1)}{bc+1}$, then $\cE_\rmd(\hat D_{X|Y}) - \cE_\rmd(D_{\PP_Y}) = \cO(M^\frac{-ac}{c+1})$  with $\epsilon=M^\frac{-a}{c+1}$
        \item If $a\geq\frac{b(c+1)}{bc+1}$, then $\cE_\rmd(\hat D_{X|Y}) - \cE_\rmd(D_{\PP_Y}) = \cO(M^\frac{-bc}{bc+1})$  with $\epsilon=M^\frac{-b}{bc+1}$
    \end{itemize}
\end{theorem}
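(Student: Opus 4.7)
The plan is to mirror the two--stage distribution regression analysis of \citet{Szabo2016} and \citet{singh2019kernel} adapted to the deconditioning reconstruction loss $\cE_\rmd$. The central idea is that $\hat D_{X|Y}$ is not trained on the true CMEs $\mu_{X|Y=\tilde y_j} = C_{X|Y}\ell_{\tilde y_j}$ but on the plug--in estimates $\hat C_{X|Y}\ell_{\tilde y_j}$, so I would split the excess risk into an ``idealized'' Stage~2 error (as if $C_{X|Y}$ were known) and a Stage~1 propagation error due to replacing $C_{X|Y}$ with $\hat C_{X|Y}$. Concretely, introduce the intermediate estimator
\begin{equation}
    \tilde D_\epsilon := \arg\min_{D\in\cH_\Gamma}\;\frac{1}{M}\sum_{j=1}^M \|\ell_{\tilde y_j}-D\,C_{X|Y}\ell_{\tilde y_j}\|_\ell^2 + \epsilon\|D\|_\Gamma^2,
\end{equation}
and decompose
\begin{equation}
    \cE_\rmd(\hat D_{X|Y})-\cE_\rmd(D_{\PP_Y}) \le \underbrace{\bigl[\cE_\rmd(\hat D_{X|Y})-\cE_\rmd(\tilde D_\epsilon)\bigr]}_{\text{Stage 1 propagation}} + \underbrace{\bigl[\cE_\rmd(\tilde D_\epsilon)-\cE_\rmd(D_{\PP_Y})\bigr]}_{\text{Stage 2 regression}}.
\end{equation}

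For the Stage~2 term I would invoke the Caponnetto--De Vito style bound on vector--valued kernel ridge regression. Because $\PP_Y\in\cP_\Gamma(b,c)$ with respect to $\cE_\rmd$ and $\cH_\ell$ is finite dimensional (ensuring boundedness of $\ell_{\tilde y_j}$), the standard analysis yields, with choice $\epsilon\asymp M^{-1/(bc+1)}$, the minimax rate
\begin{equation}
    \cE_\rmd(\tilde D_\epsilon)-\cE_\rmd(D_{\PP_Y}) = \cO\!\bigl(M^{-bc/(bc+1)}\bigr).
\end{equation}
Retaining $\epsilon$ as a free parameter for now allows the more general rate $\cO(\epsilon^{c})+\cO(M^{-1}\epsilon^{-1/b})$ needed to balance later.

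For the Stage~1 term, the key is a perturbation estimate: the loss at $\hat D_{X|Y}$ differs from the loss at $\tilde D_\epsilon$ only through the features $\hat C_{X|Y}\ell_{\tilde y_j}$ vs.\ $C_{X|Y}\ell_{\tilde y_j}$, and the Hölder continuity of $\{\Gamma_{\mu_{X|Y=y}}\}_{y\in\cY}$ with exponent $\iota$ lets me control
\begin{equation}
    \|\Gamma_{\hat C_{X|Y}\ell_y}-\Gamma_{C_{X|Y}\ell_y}\|_{\cL(\cH_\ell,\cH_\Gamma)} \le L\,\|\hat C_{X|Y}-C_{X|Y}\|^{\iota}.
\end{equation}
Plugging the single--stage CMO convergence result of \citet{singh2019kernel} under $\PP_{XY}\in\cP_\Gamma(0,c')$ with $\lambda=N^{-1/(c'+1)}$, one obtains $\|\hat C_{X|Y}-C_{X|Y}\| = \cO_\PP(N^{-(c'-1)/(2(c'+1))})$. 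Propagating this through the regularised least--squares solution (with an extra $\epsilon^{-1}$ factor from the inverse) gives
\begin{equation}
    \cE_\rmd(\hat D_{X|Y})-\cE_\rmd(\tilde D_\epsilon) = \cO\!\bigl(\epsilon^{-1} N^{-\iota(c'-1)/(c'+1)}\bigr).
\end{equation}

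Finally, I would optimise over $N,M,\epsilon$. Coupling $N=M^{a(c'+1)/(\iota(c'-1))}$ makes the Stage~1 term scale like $\cO(\epsilon^{-1}M^{-a})$, matching the form that appears in \citet{Szabo2016}. Balancing Stage~1 against Stage~2 then splits into the two regimes stated in the theorem: when $a\le b(c+1)/(bc+1)$, setting $\epsilon=M^{-a/(c+1)}$ equates the two contributions at $\cO(M^{-ac/(c+1)})$; otherwise Stage~2 dominates with the minimax optimal $\cO(M^{-bc/(bc+1)})$ choice $\epsilon=M^{-b/(bc+1)}$. The main obstacle will be executing the Stage~1 propagation cleanly: I must combine the operator Hölder bound with the non--commutativity of the plug--in regularised inverse, and keep track of how the boundedness of $\ell$ (granted by finite dimensionality of $\cH_\ell$) feeds into the constants, since without that control the $\iota$--Hölder step does not convert into the right polynomial rate.
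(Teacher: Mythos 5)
Your strategy is sound and arithmetically consistent with the stated rates, but it is a genuinely different route from what the paper does. The paper's proof contains no error decomposition at all: it simply reformulates each of the nine hypotheses of Theorem~4 of \citet{singh2019kernel} in the deconditioning setting, verifies them one by one (Polish spaces, kernel regularity, the pre-aggregation observation model $\tilde Z = f(X)+\tilde\varepsilon$ with $\EE[\tilde\varepsilon\mid Y]=0$, well-posedness of both $\arg\min$'s, uniform Hilbert--Schmidt boundedness and H\"older continuity of $\{\Gamma_{\mu_{X|Y=y}}\}_{y\in\cY}$, the $\cP_\Gamma(0,c')$ and $\cP_\Gamma(b,c)$ prior classes, and the extension of $\PP_X,\PP_Y$ to measures on $\cH_k,\cH_\ell$ via separability), and then invokes that theorem as a black box. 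What you propose is essentially to re-derive the content of that cited theorem: the intermediate estimator $\tilde D_\epsilon$, the Stage~1/Stage~2 split, the Caponnetto--De~Vito bound for Stage~2, and the H\"older propagation of the CMO error for Stage~1 are exactly the internal machinery of \citet{Szabo2016} and \citet{singh2019kernel}. Your approach buys a self-contained argument and makes visible where each assumption enters (e.g.\ that finite dimensionality of $\cH_\ell$ is what delivers $\|\Gamma_{\mu_{X|Y=y}}\|^2_{\operatorname{HS}} = q(y,y)\tr(\operatorname{Id}_{\cH_\ell})<\infty$, which the paper checks explicitly); the paper's approach buys brevity at the cost of opacity. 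Be aware, however, that the step you yourself flag as the obstacle --- propagating $\|\hat C_{X|Y}-C_{X|Y}\|^\iota$ through the regularised inverse to get the $\epsilon^{-1}N^{-\iota(c'-1)/(c'+1)}$ bound --- is precisely the technical core of the cited theorem and is not trivial; as written your proposal asserts rather than proves it, so to be complete you would either have to execute that perturbation argument in full or, as the paper does, fall back on citing the result whose hypotheses you have verified.
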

\begin{proof}[Proof of Theorem~\ref{theorem:convergence-rate}]

The main objective here will be to rigorously verify that within our setup, the conditions in Theorem 4 from \cite{singh2019kernel} are met. We reformulate from our problem perspective each of the assumptions stated by \citet{singh2019kernel} and verify they are satisfied.

\paragraph*{Assumption 1} \emph{Assume observation model $\tilde Z = f(X) + \tilde\varepsilon$, with $\EE[\tilde\varepsilon|Y] = 0$ and suppose $\PP_{X|Y=y}$ is not constant in $y$.}

In this work, the observation model considered is $Z = \EE[f(X)|Y] + \varepsilon$ and the objective is to recover the underlying random variable $f(X)$ which noisy conditional expectation is observed. The latter presumes that we could bring $Z$ to $X$'s resolution. We can model it by introducing \enquote{pre-aggregation} observation model $\tilde Z = f(X) + \tilde\varepsilon$ such that $Z = \EE[\tilde Z|Y]$ and $\tilde\varepsilon$ is a noise term at individual level satisfying $\EE[\tilde\varepsilon|Y] = 0$.

\paragraph*{Assumption 2} \emph{$\cX$ and $\cY$ are Polish spaces}.

We also make this assumption.

\paragraph*{Assumption 3} \emph{$k$ and $\ell$ are continuous and bounded, their canonical feature maps are measurable and $k$ is characteristic}.

We make the same assumptions. The separability of $\cX$ and $\cY$ along with continuity assumptions on kernels allow to propagate separability to their associated RKHS $\cH_k$ and $\cH_\ell$ and to the vector-valued RKHS $\cH_\Gamma$. Boundedness and continuity on kernels ensure the measurability of the CMO and hence that measures on $\cX$ and $
cY$ can be extended to $\cH_k$ and $\cH_\ell$. The assumption on $k$ being characteristic ensures that conditional mean embeddings $\mu_{X|Y=y}$ uniquely embed conditional distributions $\PP_{X|Y=y}$ and henceforth operators over $\cH_\ell$ are identified.

\paragraph*{Assumption 4}\emph{$\arg\min\,\cE_\rmc\in\cH_\Gamma$}.

This property stronger is than what the actual conditional mean operator needs to satisfy, but it is necessary to make sure the problem is well-defined. We also make this assumption.

\paragraph*{Assumption 5}\emph{$\PP_{XY}$ is a $\cP_\Gamma(0, c')$ class probability measure, with $c'\in]1, 2]$}

As explained by \citet{singh2019kernel}, this is further required to bound the approximation error which we also make. Through the definition of the $\cP_\Gamma(0, c')$ class, this hypothesis assumes the existence of a probability measure over $\cH_k$ we denote $\PP_{\cH_k}$. Since $\cH_k$ is Polish (proof below), the latter can be constructed as an extension of $\PP_X$ over the Borel $\sigma$-algebra associated to $\cH_k$~\cite[Lemma A.3.16]{steinwart2008svm}.

\paragraph*{Assumption 6}\emph{$\cH_k$ is a Polish space}

Since $k$ is continuous and $\cX$ is separable, $\cH_k$ is a separable Hilbert space which makes it Polish.

\paragraph*{Assumption 7}\emph{The $\{\Gamma_{\mu_{X|Y=y}}\}_{y\in\cY}$ operator family is}
\begin{itemize}
    \item Uniformly bounded in Hilbert-Schmidt norm, i.e.\ $\exists B > 0$ such that \ $\forall y\in\cY$, $\|\Gamma_{\mu_{X|Y=y}}\|^2_{\operatorname{HS}(\cH_\ell, \cH_\Gamma)} \leq B$
    \item H\"older continuous in operator norm, i.e.\ $\exists L > 0, \iota\in]0, 1]$ such that $\forall y, y'\in\cY$, $\|\Gamma_{\mu_{X|Y=y}} - \Gamma_{\mu_{X|Y=y'}}\|_{\cL(\cH_\ell, \cH_\Gamma)}\leq L\|\mu_{X|Y=y} - \mu_{X|Y=y'}\|^\iota_{k}$
\end{itemize}
where $\cL(\cH_\ell, \cH_\Gamma)$ denotes the space of bounded linear operator between $\cH_\ell$ and $\cH_\Gamma$.

Since we assume finite dimensionality of $\cH_\ell$, we make a stronger assumption than the boundedness in Hilbert-Schmidt norm which we obtain as
\begin{align}
    \|\Gamma_{\mu_{X|Y=y}}\|^2_{\operatorname{HS}(\cH_\ell, \cH_\Gamma)} & = \tr\left(\Gamma(\mu_{X|Y=y}, \mu_{X|Y=y})\right) \\ 
    & = \tr\left(\langle \mu_{X|Y=y}, \mu_{X|Y=y}\rangle_k\operatorname{Id}_{\cH_\ell}\right) \\
    & = q(y, y)\tr\left(\operatorname{Id}_{\cH_\ell}\right) < \infty.
\end{align}

H\"older continuity is a mild assumption commonly satisfied as stated in ~\cite{Szabo2016}.

\paragraph*{Assumption 8}\emph{$\arg\min\,\cE_\rmd\in\cH_\Gamma$ and $\cH_\ell$ is a space of bounded functions almost surely}

We assume that the true minimiser of $\cE_\rmd$ is in $\cH_\Gamma$ to have a well-defined problem. The second assumption here is expressed in terms of probability measure $\PP_{\cH_\ell}$ over $\cH_\ell$. We do also assume that there exists $B > 0$ such that $\forall g\in\cH_\ell$, $\|g\|_\ell < B\enspace \PP_{\cH_\ell}-$ almost surely.

\paragraph*{Assumption 9}\emph{$\PP_Y$ is a $\cP_\Gamma(b, c)$ class probability measure, with $b > 1$ and $c\in]1, 2]$}

This last hypothesis is not required per se to obtain a bound on the excess error of regularized estimate $\hat D_{X|Y}$. However, it allows to simplify the bounds and state them in terms of parameters $b$ and $c$ which characterize efficient input size and functional smoothness respectively.

Furthermore, a premise to this assumption is the existence of a probability measure over $\cH_\ell$ that we denote $\PP_{\cH_\ell}$. Since $\ell$ is continuous and $\cY$ separable, it makes $\cH_\ell$ a separable and thus Polish. We can then construct $\PP_{\cH_\ell}$ by extension of $\PP_Y$~\cite[Lemma A.3.16]{steinwart2008svm}
\end{proof}

This theorem underlines a trade-off between the computational and statistical efficiency w.r.t.\ the datasets cardinalities $N = |\cD_1|$ and $M = |\cD_2|$ and the problem difficulty $(b, c, c')$.

For $a \leq \frac{b(c+1)}{bc+1}$, smaller $a$ means less samples from $\cD_1$ at fixed $M$ and thus computational savings. But it also hampers convergence, resulting in reduced statistical efficiency. At $a=\frac{b(c+1)}{bc+1} < 2$, convergence rate is a minimax computational-statistical efficiency optimal, i.e.\ convergence rate is optimal with smallest possible $M$. We note that at this optimal, $N > M$ and hence we require less samples from $\cD_2$. $a\geq\frac{b(c+1)}{bc+1}$ does not improve the convergence rate but increases the size of $\cD_1$ and hence the computational cost it bears.

We also note that larger H\"{o}lder exponents $\iota$, which translates in smoother kernels, leads to reduced $N$. Similarly, since $c'\mapsto\frac{c'+1}{c-1}$ and $c\mapsto\frac{b(c+1)}{bc+1}$ are strictly decreasing functions over $]1, 2]$, stronger range assumptions regularity which means smoother operators reduces the number of sample needed from $\cD_1$ to achieve minimax optimality. Smoother problems do hence require fewer samples.

Larger spectral decay exponent $b$ translate here in requiring more samples to reach minimax optimality and undermines optimal convergence rate. Hence problems with smaller effective input dimension are harder to solve and require more samples and iterations.

\newpage
\section{Additional Experimental Results}\label{appendix:section:experiments}

\subsection{Swiss Roll Experiment}\label{appendix:subsection:swiss-roll}
\subsubsection{Statistical significance table}
\begin{table}[!h]
\centering
\caption{p-values from a two-tailed Wilcoxon signed-rank test between all pairs of methods for the test RMSE of the swiss-roll experiment with a direct and indirect matching setup. The null hypothesis is that scores samples come from the same distribution. We only present the lower triangular matrix of the table for clarity of reading.}
\begin{tabular}{lrccccccc}
\toprule
   Matching &       &  \textsc{cmp} &    \textsc{bagg-gp} &    \textsc{varcmp} &     \textsc{vbagg} &     \textsc{gpr} &      \textsc{s-cmp}\\
\midrule
Direct  & \textsc{cmp} & - &  - &  - &  - &  - &  - \\
        & \textsc{bagg-gp} &  0.00006 & - &  - &  - &  - &  - \\
        & \textsc{varcmp} &  0.00008 &  0.00006 & - &  - &  - &  - \\
        & \textsc{vbagg} &  0.00006 &  0.00006 &  0.005723 & - &  - &  - \\
        & \textsc{gpr} &  0.00006 &  0.00006 &  0.00006 &  0.00006 & - &  - \\
        & \textsc{s-cmp} &  0.00006 &  0.00006 &  0.000477 &  0.014269 &  0.00006 & - \\ \midrule
Indirect & \textsc{cmp} & - &  - &  - &  - &  - &  -\\
         & \textsc{bagg-gp} &  0.011129 & - &  - &  - &  - & -\\
         & \textsc{varcmp} &  0.001944 &  0.015240 & - &  - &  - & -\\
         & \textsc{vbagg} &  0.000089 &  0.047858 &  0.000089 & - &  - & -\\
         & \textsc{gpr} &  0.025094 &  0.047858 &  0.047858 &  0.851925 & - & -\\
         & \textsc{s-cmp} &  0.000089 &  0.002821 &  0.000089 &  0.000140 &  0.052222 &  -\\
\bottomrule
\end{tabular}
\label{table:wilcoxon-swiss-roll}
\end{table}

\subsubsection{Compute and Resources Specifications}
Computations for all experiments were carried out on an internal cluster. We used a single GeForce GTX 1080 Ti GPU to speed up computations and conduct each experiment with multiple initialisation seeds. We underline however that the experiment does not require GPU acceleration and can be performed on CPU in a timely manner.

\newpage
\subsection{CMP with high-resolution noise observation model}

\subsubsection{Deconditional posterior with high-resolution noise}

Beyond observation noise on the aggregate observations $\tilde{\bfz}$ as introduced in Section~\ref{sec: deconditional posterior}, it is natural to also consider observing noise at the high-resolution level, i.e.\ noises placed on $f$ level directly in addition to the one $g$ at aggregate level. Let $\xi\sim\cG\cP(0, \delta)$ the zero-mean Gaussian process with covariance function
\begin{equation}
    \deffunction{\delta}{\cX\times\cX}{\RR}{(x, x')}{\begin{cases}1\text{ if } x = x' \\ 0\text{ else}\end{cases}}.
\end{equation}

By incorporating this gaussian noise process in the integrand, we can replace the definition of the CMP by
\begin{equation}
    g(y) = \int_{\cX}\left(f(x) + \varsigma\xi(x)\right) \d\PP_{X|Y=y}\,,\enspace\forall y\in\cY,
\end{equation}
where $\varsigma > 0$ is the high-resolution noise standard deviation parameter. Essentially, this amounts to consider a contaminated covariance for the HR observation process. This covariance is defined as
\begin{equation}
    \deffunction{k^\varsigma}{\cX\times\cX}{\RR}{(x, x')}{k(x, x') + \varsigma^2\delta(x, x')}.
\end{equation}
Provided the same regularity assumptions as in Proposition~\ref{proposition:cmp-characterization}, the covariance of the CMP becomes $q(y, y') = \EE[k^\varsigma(X, X')|Y=y, Y'=y']$ --- the mean and cross-covariance terms are not affected. Similarly be written in terms of conditional mean embeddings, but using as an integrand for the CMEs the canonical feature maps induced by $k^\varsigma$, i.e.\ $\mu_{X|Y=y}^\varsigma := \EE[k^\varsigma(X, \cdot)|Y=y]$ for any $y\in\cY$. Critically, this is reflected in the expression of the empirical CMP covariance which writes
\begin{equation}
    \hat q(y, y') = \ell(y, \bfy)(\bfL_{\bfy\bfy} + N\lambda{\bf I}_N)^{-1}(\bfK_{\bfx\bfx} + \varsigma^2{\bf I}_N)(\bfL_{\bfy\bfy} + N\lambda{\bf I}_N)^{-1}\ell(\bfy, y')
\end{equation}
thus, yielding matrix form
\begin{align}
    \hat\bfQ_{\tilde\bfy\tilde\bfy} & := \hat q(\tilde\bfy, \tilde\bfy) \\
    & = \bfL_{\tilde\bfy\bfy}(\bfL_{\bfy\bfy} + N\lambda{\bf I}_N)^{-1}(\bfK_{\bfx\bfx} + \varsigma^2{\bf I}_N)(\bfL_{\bfy\bfy} + N\lambda{\bf I}_N)^{-1}\bfL_{\bfy\tilde\bfy} \\
    & = {\bf A}^\top(\bfK_{\bfx\bfx} + \varsigma^2{\bf I}_N){\bf A}.
\end{align}
which can readily be used in (\ref{eq:deconditional-posterior-mean}) and (\ref{eq:deconditional-posterior-covariance}) to compute the deconditional posterior.

This high-resolution noise term introduces an additional regularization to the model that helps preventing degeneracy of the deconditional posterior covariance. Indeed, we have
\begin{align}
    \hat k_\rmd(\bfx, \bfx) & = \bfK_{\bfx\bfx} - \bfK_{\bfx\bfx}{\bf A}(\hat\bfQ_{\tilde\bfy\tilde\bfy} + \sigma^2{\bf I}_M)^{-1}{\bf A}^\top\bfK_{\bfx\bfx} \\
    & = \bfK_{\bfx\bfx} - \bfK_{\bfx\bfx}{\bf A}({\bf A}^\top(\bfK_{\bfx\bfx} + \varsigma^2{\bf I}_N){\bf A} + \sigma^2{\bf I}_M)^{-1}{\bf A}^\top\bfK_{\bfx\bfx} \\
    & = \bfK_{\bfx\bfx} - \bfK_{\bfx\bfx}({\bf A}{\bf A}^\top(\bfK_{\bfx\bfx} + \varsigma^2{\bf I}_N) + \sigma^2{\bf I}_M)^{-1}({\bf A}{\bf A}^\top\bfK_{\bfx\bfx}).\label{eq:degenerates}
\end{align}
where on the last line we have used the Woodburry identity. We can see that when $\sigma = \varsigma = 0$, (\ref{eq:degenerates}) degenerates to $0$. The aggregate observation model noise $\sigma$ provides a first layer of regularization at low-resolution. The high-resolution noise $\varsigma$ supplements it, making for a more stable numerical compuation for the empirical covariance matrix.

\subsubsection{Variational deconditional posterior with high-resolution noise}

The high-resolution noise observation process can also be incorporated into the variational derivation to obtain a slightly different ELBO objective. We have 
\begin{align}
     p({\bf\tilde z| f}) & = \cN({\bf\tilde z}| \boldsymbol{\Upsilon}^\top{\bf K}_{\bfx\bfx}^{-1}{\bf f}\;,\; {\bfQ_{\tilde\bfy\tilde\bfy}} + \sigma^2 {\bf I}_M - \boldsymbol{\Upsilon}^\top{\bf K_{\bfx\bfx}}^{-1}\boldsymbol{\Upsilon}) \\
     & = \cN({\bf\tilde z}| {\bf A}{\bf f}\;,\; {\bf A}^\top(\bfK_{\bfx\bfx} + \varsigma^2{\bf I}_N){\bf A} + \sigma^2 {\bf I}_M - {\bf A}^\top\bfK_{\bfx\bfx}{\bf A}) \\
     &= \cN({\bf\tilde z}| {\bf A}{\bf f}\;,\; \varsigma^2{\bf A}^\top {\bf A} + \sigma^2 {\bf I}_M)
\end{align}

The expected loglikelihood with respect to the variational posterior hence writes
\begin{align}
    \EE_{q({\bf f})}[p({\bf\tilde z| f})] = & -\frac{M}{2}\log(2\pi) -\frac{1}{2}\log\det(\varsigma^2{\bf A}^\top {\bf A}
    + \sigma^2{\bf I}_M)\\
    & - \frac{1}{2}\EE_{q(\bf f)}\left[(\tilde\bfz - {\bf A}^\top{\bf f})^\top(\varsigma^2{\bf A}^\top {\bf A}+ \sigma^2{\bf I}_M)^{-1}(\tilde\bfz - {\bf A}^\top{\bf f})\right]
\end{align}

With a derivation similar to the one proposed in Appendix~\ref{appendix:variational}, the expected loglikelihood can be expressed in terms of the posterior variational parameters as
\begin{align}
    \EE_{q({\bf f})}[p({\bf\tilde z| f})] = & -\frac{M}{2}\log(2\pi) -\frac{1}{2}\log\det(\varsigma^2{\bf A}^\top {\bf A} + \sigma^2{\bf I}_M) \\
    & - \frac{1}{2}(\tilde\bfz - {\bf A}^\top{\bf\bar\bfeta})^\top(\varsigma^2{\bf A}^\top {\bf A}+ \sigma^2{\bf I}_M)^{-1}(\tilde\bfz - {\bf A}^\top{\bf\bar\bfeta}) \\
    & - \frac{1}{2}\tr\left((\varsigma^2{\bf A}^\top {\bf A}+ \sigma^2{\bf I}_M)^{-1}{\bf A}^\top {\bf\bar\bfSigma}{\bf A}\right)
\end{align}
In particular, the last term can be rearranged into $\tr\left({\bf\bar\bfSigma}^{\nicefrac{1}{2}}{\bf A}(\varsigma^2{\bf A}^\top {\bf A}+ \sigma^2{\bf I}_M)^{-1}{\bf A}^\top {\bf\bar\bfSigma}^{\nicefrac{1}{2}}\right)$ which can efficiently be computed as an inverse quadratic form~\cite{gardner2018gpytorch}.

\newpage
\subsection{Mediated downscaling of atmospheric temperature}

\subsubsection{Map visualization of atmospheric fields dataset}
\begin{figure}[h!]
    \centering
    \includegraphics[width=\linewidth]{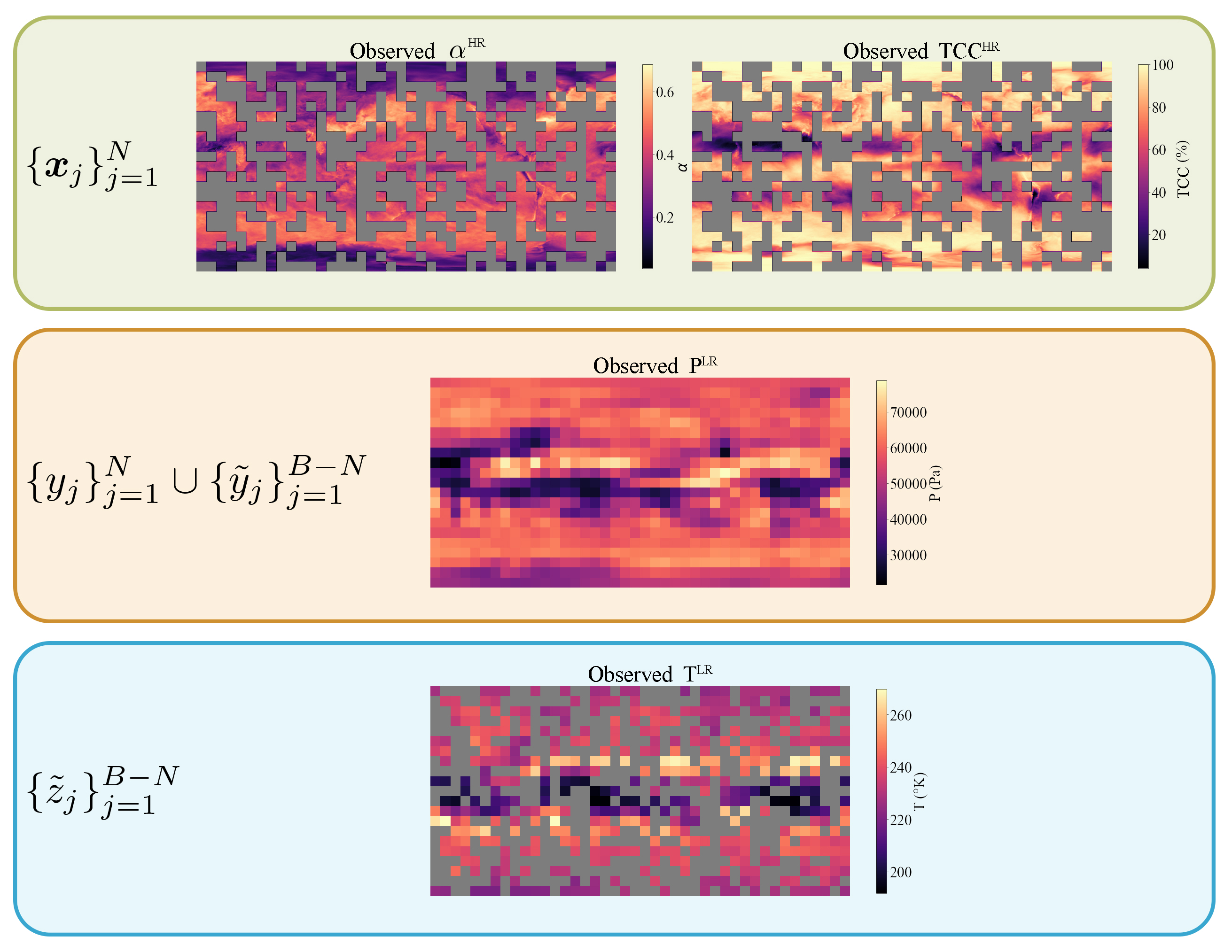}
    \caption{Map visualization of the dataset used in the mediated downscaling experiment (for one random seed); {\bf Top:} Bags of high-resolution albedo $\alpha$\textsuperscript{HR} and total cloud cover TCC\textsuperscript{HR} pixels which are observed in $\cD_1$ --- each \enquote{coarse pixel} delineates a bag of HR pixels; {\bf Middle:} Low-resolution pressure field P\textsuperscript{LR} which is observed everywhere and plays the role of mediating variable; {\bf Bottom:} Low-resolution temperature field T\textsuperscript{LR} pixels which are observed in $\cD_2$ and that we want to downscale; grey pixels are unobserved; the grey layer on HR covariates maps (top) is the exact complementary of the grey layer on the observed T\textsuperscript{LR} map (bottom).}
    \label{fig:downscaling-dataset}
\end{figure}

\newpage
\subsubsection{Downscaling prediction maps}
\begin{figure}[h!]
    \centering
    \includegraphics[width=\linewidth]{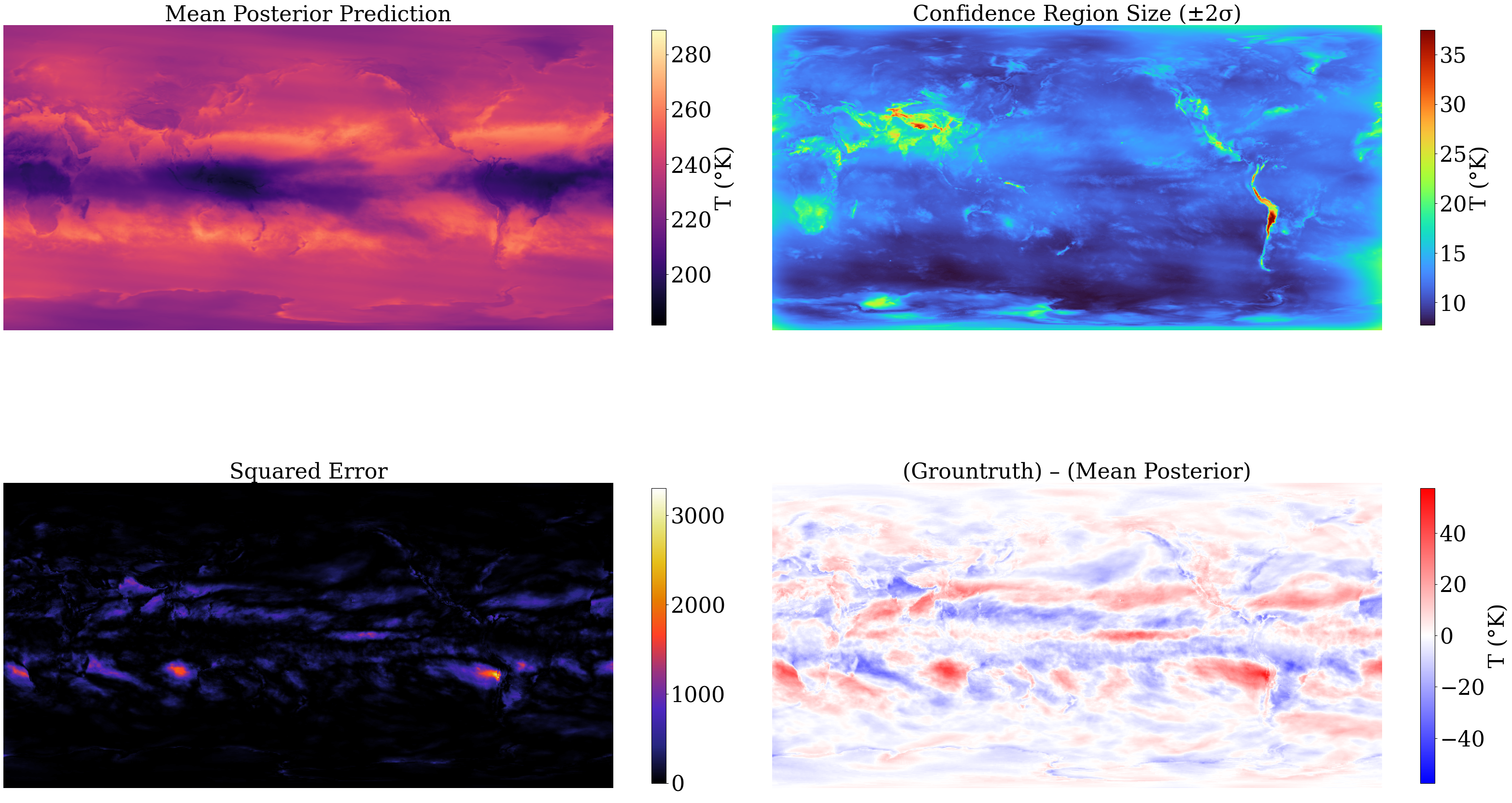}
    \caption{Predicted downscaled atmospheric temperature field with \textsc{vargpr}; {\bf Top-Left:} Posterior mean; {\bf Top-Right:} 95\% confidence region size, i.e.\ 2 standard deviation of the posterior; {\bf Bottom-Left:} Squared difference with unobserved groundtruth T\textsuperscript{HR}; {\bf Bottom-Right:} Difference between unobserved groundtruth T\textsuperscript{HR} and the posterior mean.}
    \label{fig:downscaling-prediction-gpr}
\end{figure}
\begin{figure}[h!]
    \centering
    \includegraphics[width=\linewidth]{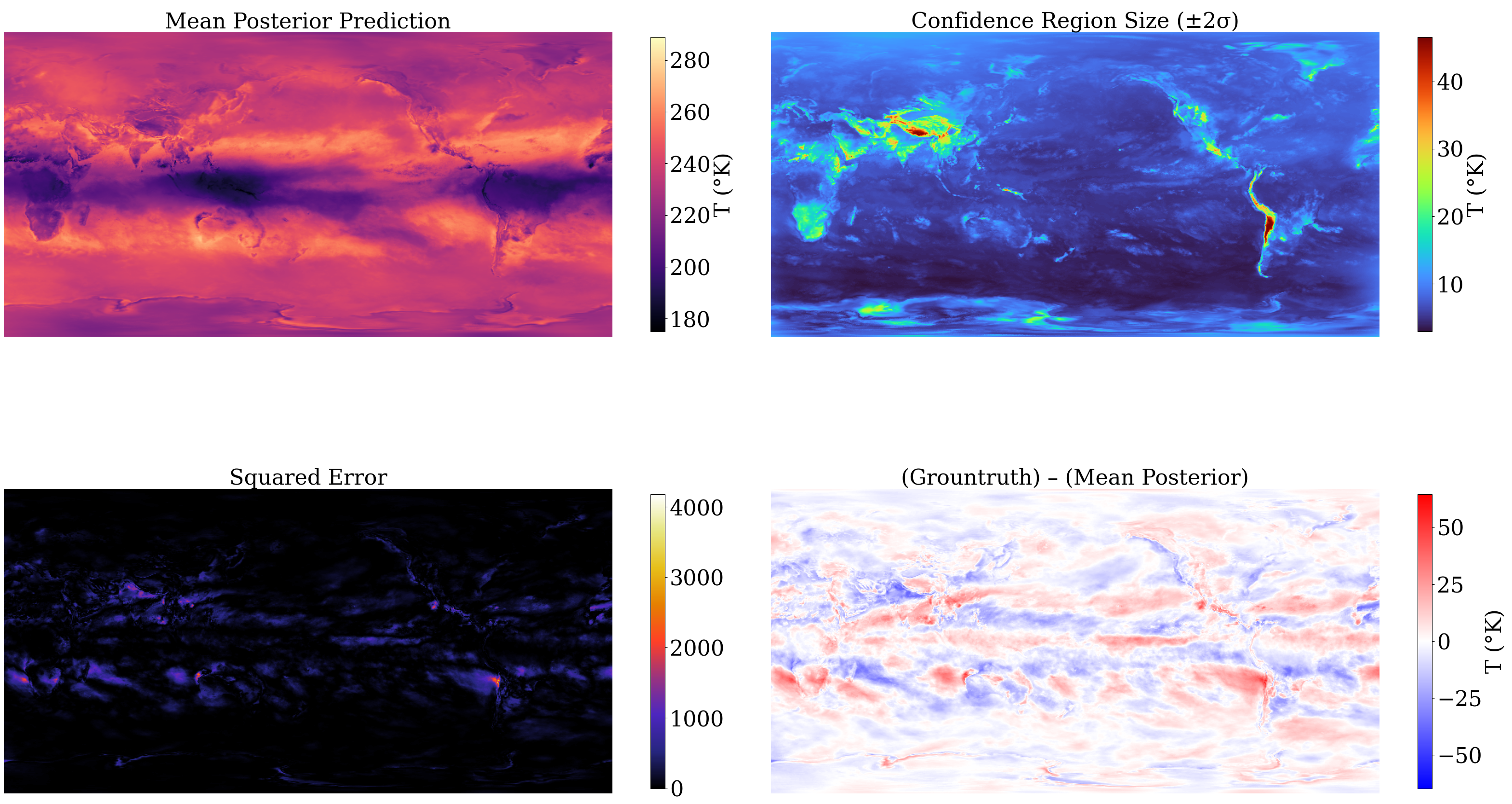}
    \caption{Predicted downscaled atmospheric temperature field with \textsc{vbagg}; {\bf Top-Left:} Posterior mean; {\bf Top-Right:} 95\% confidence region size, i.e.\ 2 standard deviation of the posterior; {\bf Bottom-Left:} Squared difference with unobserved groundtruth T\textsuperscript{HR}; {\bf Bottom-Right:} Difference between unobserved groundtruth T\textsuperscript{HR} and the posterior mean.}
    \label{fig:downscaling-prediction-vbagg}
\end{figure}
\begin{figure}[h!]
    \centering
    \includegraphics[width=\linewidth]{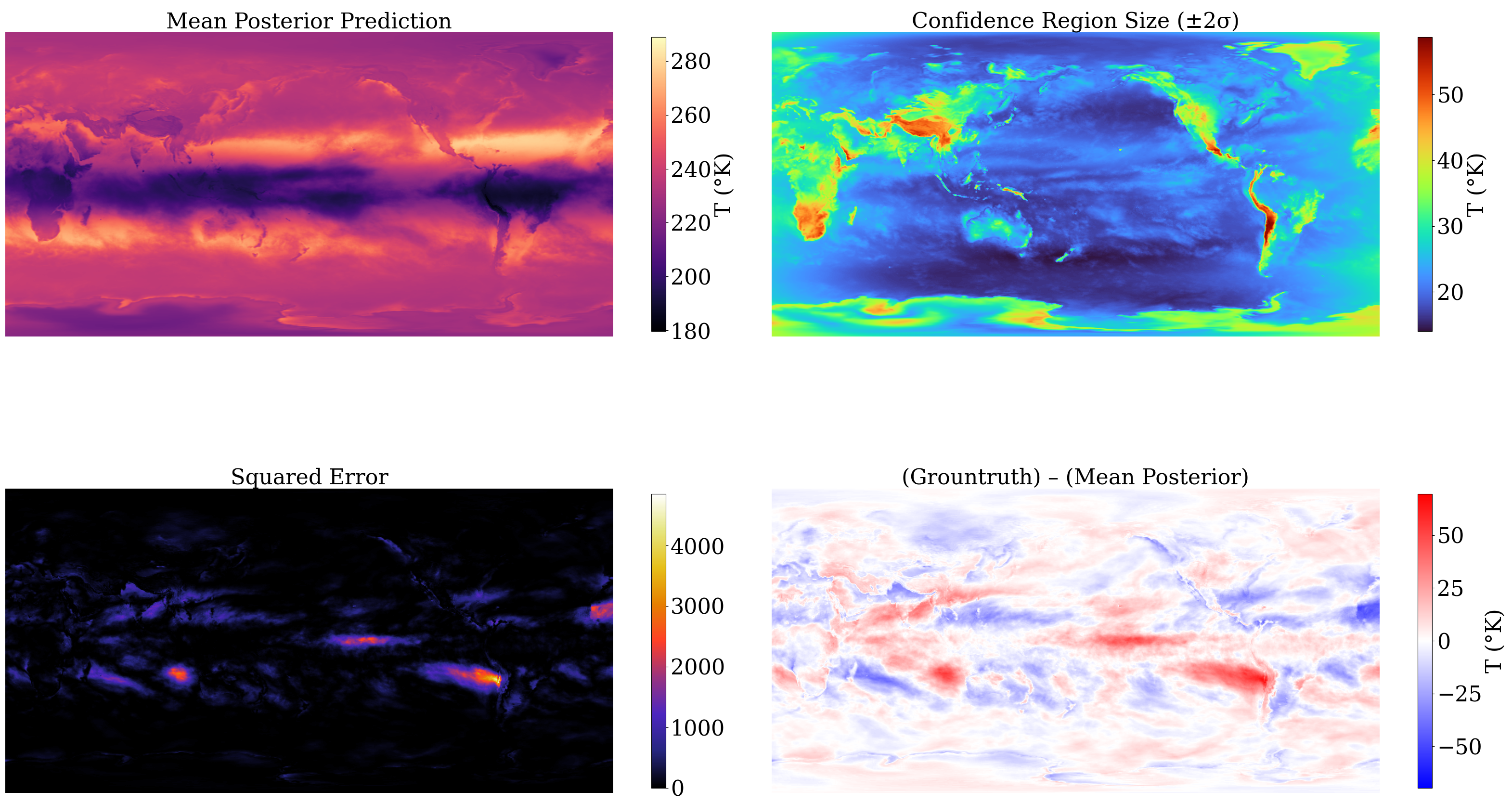}
    \caption{Predicted downscaled atmospheric temperature field with \textsc{varcmp}; {\bf Top-Left:} Posterior mean; {\bf Top-Right:} 95\% confidence region size, i.e.\ 2 standard deviation of the posterior; {\bf Bottom-Left:} Squared difference with unobserved groundtruth T\textsuperscript{HR}; {\bf Bottom-Right:} Difference between unobserved groundtruth T\textsuperscript{HR} and the posterior mean.}
    \label{fig:downscaling-prediction-varcmp}
\end{figure}

\newpage
\subsubsection{Statistical significance table}
\begin{table}[h!]
\centering
\caption{p-values from a two-tailed Wilcoxon signed-rank test between all pairs of methods for the evaluation scores on the mediated statistical downscaling experiment. The null hypothesis is that scores samples come from the same distribution. As before, we only present the lower-traingular table for clarity of reading.}
\begin{tabular}{lrccc}
\toprule
Metric  &       &    \textsc{varcmp} &     \textsc{vbagg} &     \textsc{vargpr} \\
\midrule
     & \textsc{varcmp} & - &  - &  - \\
RMSE & \textsc{vbagg} &  0.005062 & - &  - \\
     & \textsc{vargpr} &  0.006910 &  0.046853 & - \\\midrule
     & \textsc{varcmp} & - &  - &  - \\
MAE  & \textsc{vbagg} &  0.005062 & - &  - \\
     & \textsc{vargpr} &  0.059336 &  0.006910 & - \\\midrule
     & \textsc{varcmp} & - &  - &  - \\
CORR& \textsc{vbagg} &  0.005062 & - &  - \\
     & \textsc{vargpr} &  0.016605 &  0.028417 & - \\\midrule
     & \textsc{varcmp} & - &  - &  - \\
SSIM & \textsc{vbagg} &  0.005062 & - &  - \\
     & \textsc{vargpr} &  0.959354 &  0.005062 & - \\
\bottomrule
\end{tabular}

\label{table:wilcoxon-matched-swiss-roll-rmse}
\end{table}

\subsubsection{Compute and Resources Specifications}
Computations for all experiments were carried out on an internal cluster. We used a single GeForce GTX 1080 Ti GPU to speed up computations and conduct each experiment with multiple initialisation seeds.

\end{document}